\documentclass[letterpaper, 10 pt, conference]{ieeeconf}

\IEEEoverridecommandlockouts
\usepackage{makecell}
\usepackage{algorithm}
\usepackage[noend]{algpseudocode}
\usepackage{array}
\usepackage[caption=false,font=normalsize,labelfont=sf,textfont=sf]{subfig}
\usepackage{textcomp}
\usepackage{stfloats}
\usepackage{url}
\usepackage{xcolor}
\usepackage{verbatim}
\usepackage{graphicx}

\usepackage{amsthm}
\usepackage{amssymb,mathtools}
\usepackage{cite}
\usepackage{optidef}
\usepackage{multirow}
\usepackage{booktabs}
\usepackage{siunitx}
\usepackage{threeparttable}
\usepackage{xurl}
\usepackage[hidelinks]{hyperref}

\title{\LARGE \bf
SHIELD: Scalable Optimal Control with Certification using Duality and Convexity}

\author{Hansung Kim$^1$, Siddharth H. Nair$^2$, and Francesco Borrelli$^1$
\thanks{$^1$HK, FB are with the Model Predictive Control Laboratory, UC Berkeley. E-mails: 
\{hansung, fborrelli\}@berkeley.edu}
\thanks{$^2$SHN is with Nextracker Inc.}
}

\makeatletter
\algrenewcommand\alglinenumber[1]{\footnotesize #1}
\makeatother

\theoremstyle{plain}
\newtheorem{assumption}{Assumption}

\newtheorem{theorem}{Theorem}
\newtheorem{Definition}{Definition}
\newtheorem{proposition}{Proposition}
\newtheorem{corollary}{Corollary}

\begin{document}

\maketitle
\thispagestyle{empty}
\pagestyle{empty}

\begin{abstract}

We present SHIELD, a hierarchical algorithm that reduces both the decision-variable dimension and the constraint set in $\ell_1$-regularized convex programs. From strong convexity and Lagrangian duality, we derive certificates that \emph{safely} discard constraints and decision variables while guaranteeing that all removed constraints remain satisfied and all removed variables are null. To further accelerate the proposed algorithm, we propose a transformer-based deep neural network to guide the dual certificate inference. We validate SHIELD on stochastic model predictive control (SMPC) in complex, multi-modal traffic scenarios, comparing against a full-dimensional SMPC policy. Numerical simulations demonstrate order-of-magnitude computational speedups while preserving feasibility and closed-loop safety, highlighting the practicality of certifiably safe, lightweight MPC in complex driving scenes. GitHub: \url{https://github.com/MPC-Berkeley/SHIELD%5Fmpc}.

\end{abstract}
\section{Introduction}
Deep learning and other data-driven techniques have grown rapidly in modern control, with broad adoption in robotics, power systems, chemical processes, and manufacturing. Learning is commonly used for model identification, uncertainty quantification, controller performance improvement, and accelerating online control computation. This last direction has become particularly active in optimization-based control, especially Model Predictive Control (MPC). At each control step, MPC solves a constrained optimization problem to compute the next input while accounting for dynamics, physical limits, and safety margins. However, this repeated optimization must run at real-time rates—often milliseconds—which becomes challenging as problem complexity grows, e.g., longer horizons, richer models, and hundreds of inequality constraints. In dynamic, uncertain environments, such computational latency can directly degrade performance or compromise safety.
\begin{figure} \label{fig:shield}
    \centering
    \includegraphics[width=0.85\linewidth]{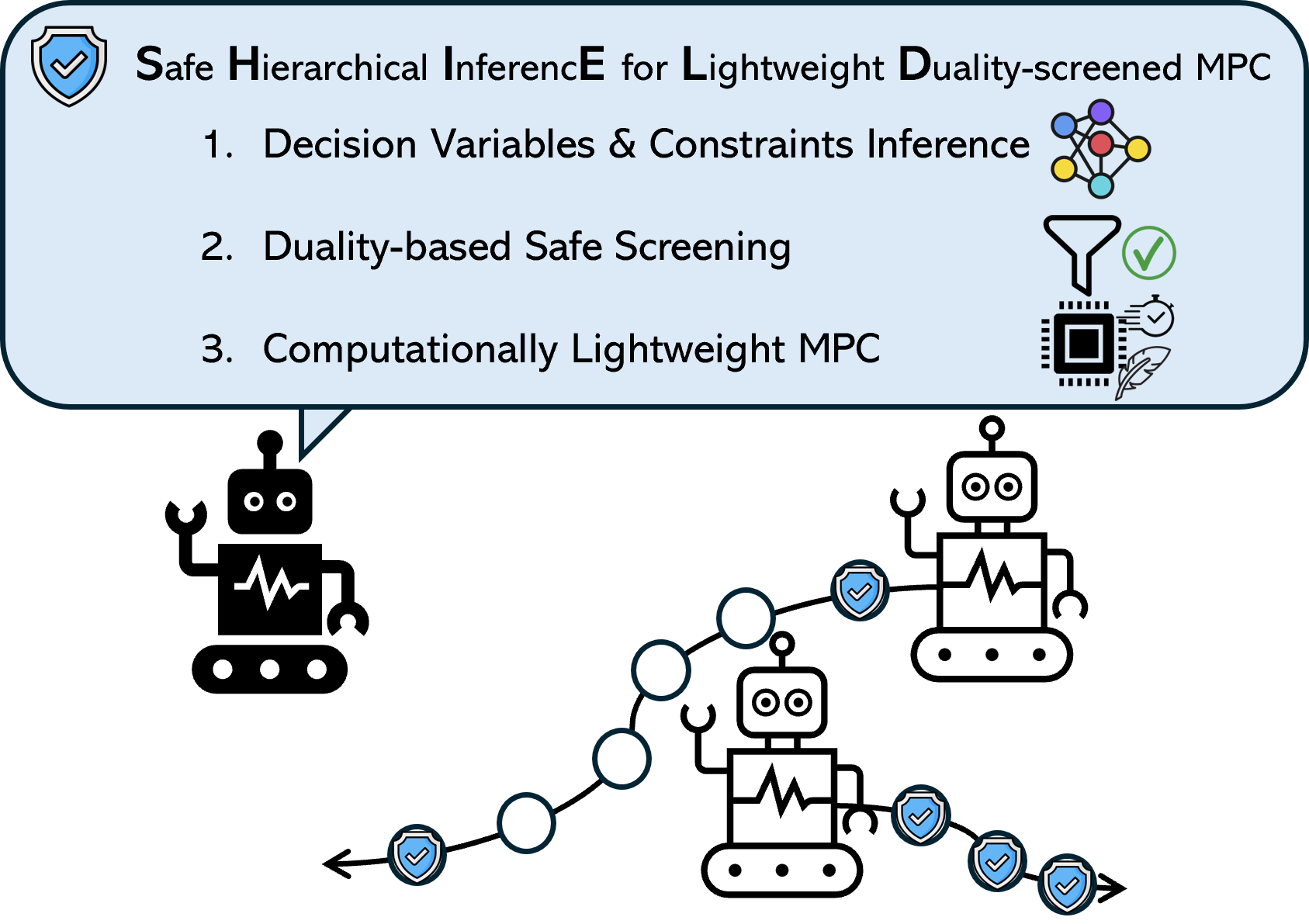}
    \caption{An illustration of the Safe Hierarchical Inference for Lightweight Duality-Screened MPC (SHIELD). The controlled robot (black) applies the SHIELD algorithm to infer about which constraints and decision variables are safe to ignore for computationally lightweight model predictive control.}
    \label{fig:placeholder}

\end{figure}

Recent supervised-learning frameworks address this computational bottleneck via the “learning to optimize” paradigm~\cite{Sacks_2022,arango}, embedding neural priors into solver pipelines to achieve runtime gains, particularly in large-scale or embedded control systems. These methods exploit structure in MPC solutions and have been studied for convex and mixed-integer convex problems. Broadly, they fall into two approaches trading off speed and optimality. The first learns warm-starts, predicting initial guesses to reduce solver iterations~\cite{Stellato_2017,quirynen,Marcucci}. The second predicts complete or partial solutions, learning mappings from problem parameters to optimizers~\cite{primal-dual,lampos,coco,cauligi2022learningmixedintegerconvexoptimization}. Warm-starting is typically more robust to mispredictions and preserves certifiable optimality by solving the full problem; however, in time-varying settings, it is difficult to predict warm-starts that consistently reduce iterations. Solution-prediction methods can be fast (via direct prediction or reduced online solves after eliminating variables/constraints), and recent work quantifies suboptimality~\cite{primal-dual,lampos}; nevertheless, mispredictions can cause suboptimality or infeasibility. We therefore target a robust approach for accelerating real-time convex MPC with time-varying constraints, variables, and cost, as in multi-robot coordination, energy-aware fleet control, and autonomous driving.

Our work is related to the safe screening literature in static optimization~\cite{elghaoui2010,bonnefoy2015,fercoq2015}, which exploits feasible dual points—typically from a regularization path over decreasing $\lambda$—to certify inactive features or constraints before solving. These methods assume a fixed problem instance with a cheaply available high-quality dual. In contrast, MPC requires re-screening at every control step as constraints, cost, and parameters change, with no regularization path to exploit. Our approach differs in that (i) $\zeta$-tightening certifies feasibility of the \emph{original} untightened constraints, not merely optimality-based inactivity; (ii) a learned classifier predicts the dual support structure so that only a reduced-dimension dual is solved, avoiding the cost of a full dual solve; and (iii) the duality-gap bound provides valid certificates from any feasible dual, including approximate ones, enabling robust screening under distribution shift inherent in time-varying control.

Building on previous work~\cite{kim_iv}, we propose a hierarchical learning-based approach for scalable constrained convex optimal control in dynamic scenarios.
Our approach has two components: (i) an attention-based deep neural network (DNN), trained on time-varying data, that predicts \emph{marginally important} \textit{variables and constraints} for optimal control at run-time; and (ii) a Lagrange duality and strong convexity-based safety screen that certifies whether inferred variables and constraints can be safely eliminated without violating original constraints. By marginally important, we mean variables and constraints whose presence or absence does not affect the optimal planned trajectory.

The result is a lightweight convex optimal control problem with fewer decision variables and constraints, enabling fast run-time solves. 
Rather than relying on \emph{a priori} statistical guarantees for the neural network, we use the duality-based certification algorithm online to maintain robustness under train-to-test distribution shift in time-varying scenarios. The resulting algorithm applies to any convex program with Lipschitz constraints and satisfying Slater conditions, independent of the control-specific structure. 

Sec.~\ref{sec:prob} formulates a generic time-varying convex optimal control problem. 
Sec.~\ref{sec:method} presents our \textbf{S}afe \textbf{H}ierarchical \textbf{I}nferenc\textbf{E} for \textbf{L}ightweight, \textbf{D}uality-screened (SHIELD) MPC algorithm for safe computational acceleration, and Sec.~\ref{sec:example} demonstrates its effectiveness in multi-modal traffic MPC.

\section{Problem Statement} \label{sec:prob}
We consider a convex, finite-horizon optimal control problem that must be solved at a high rate to enable real-time robot control. In a standard MPC setup, at each time step $t$, the robot solves the following optimization problem:
{\small{
\begin{subequations}\label{opt:convex_opt}
\begin{align}
 \min_{\mathbf{x}_t,\mathbf{u}_t}&\quad f^0_t(\mathbf{x}_t,\mathbf{u}_t)
 \label{eq:cost}\\
 \text{s.t. }&\quad f^i_t(\mathbf{x}_t,\mathbf{u}_t) \leq 0, \quad \forall i\in\mathbb{I}_1^{m} \label{eq:fi}\\
 &\quad h^i_t(\mathbf{x}_t,\mathbf{u}_t) = 0, \quad \forall i\in\mathbb{I}_1^{p}
 \label{eq:equality}
\end{align}
\end{subequations}}}to compute feedback policy $\pi_t(x_t)=u^\star_{t|t}$, where $x_{k|t}\in\mathbb{R}^{n_x}$ and $u_{k|t}\in\mathbb{R}^{n_u}$ are state and input at prediction step $k$ from time $t$.
Let $\mathbf{x}_t$ and $\mathbf{u}t$ stack states and inputs over horizon $N$, and denote ${k_1,\dots,k_2}$ by $\mathbb{I}_{k_1}^{k_2}$.
We assume $f_t^0$ is strongly convex in $(\mathbf{x},\mathbf{u})$.
Functions $f_t^i$ in~\eqref{eq:fi} are convex inequality constraints, and $h_t^i$ in~\eqref{eq:equality} encode linear dynamics and initial conditions.
\eqref{opt:convex_opt} has $m$ inequality and $p$ equality constraints.

Computing $\pi_t(\cdot)$ by solving \eqref{opt:convex_opt} at high rate becomes challenging at large problem scales, even on modern CPUs.
In heterogeneous multi-agent, multi-modal settings, constraints can grow combinatorially with the number of modes and surrounding agents, leading to prohibitive solve times~\cite{kim_iv,nair_mm_mpc}. 
Likewise, centralized (and some distributed) optimization-based multi-robot controllers scale poorly with the number of robots~\cite{drobustmpc,conflict_mpc,centralized_multi}. 
When constraints and decision variables grow superlinearly with horizon, agents, or scenario complexity, even fast solvers may not deliver low-latency control.

Our objective is to accelerate computation of $\pi_t(\cdot)$ in real-time while certifying safety, i.e., satisfaction of the original constraints in~\eqref{opt:convex_opt}.
Using certifiable screening, we solve a reduced convex program that provides a feasible solution to the original problem by construction. Certifiability is defined as preserving feasibility of the original convex program \eqref{opt:convex_opt}.
\section{Safe Hierarchical Inference for Duality-Screened Lightweight Optimal Control} \label{sec:method}
\subsection{$\ell_1$ Regularized Optimization Problem} \label{sec:reg_prob}
Without loss of generality, we adopt an affine control policy,
\(
u_{k|t}=\pi_{\boldsymbol{\theta}_t}\!\left(x_{k|t},\, z_t\right),
\)
where $u_{k|t}$ is the control at prediction step $k$ given information at time $t$,
$x_{k|t}$ is the predicted state, and $z_t$ collects environment parameters (e.g., obstacle states).
We optimize over the policy parameters $\boldsymbol{\theta}_t$.
Such parameterizations are standard in MPC; in uncertain multi-modal settings they combine state feedback
with affine disturbance feedback (ADF), improving feasibility and reducing conservatism~\cite{nair_mm_mpc}.
For open-loop input-sequence optimization, the feedback parameters are zero and $\boldsymbol{\theta}_t$ is the open-loop sequence.

Let the decision variable $\boldsymbol{\theta}_t$ collect the control policy parameters across the prediction horizon. We add $N$ additional equality constraints for $u_{k|t}=\pi_{\boldsymbol{\theta}_t}\!\left(x_{k|t},\, z_t\right)$ to $h^i_t$ represented by \eqref{eq:l1_equality} and add $l_1$ regularization to the cost function:{\small{
\begin{subequations}\label{opt:l1_general}
\begin{align}
 \min_{\mathbf{x}^i_t,\mathbf{u}^i_t,\boldsymbol{\theta}_t}&\quad f^0_t(\mathbf{x}_t,\mathbf{u}_t) + \lambda \lVert S\boldsymbol{\theta}_t\rVert_1
 \label{eq:l1_cost}\\
 \text{s.t. }&\quad \Bar{f}^i_t(\mathbf{x}_t,\mathbf{u}_t) \leq 0, \quad \forall i\in\mathbb{I}_1^{m-c} \label{eq:l1_fi}\\
 &\quad \Tilde{f}^i_t(\mathbf{x}_t,\mathbf{u}_t) \leq 0, \quad  \forall i\in\mathbb{I}_1^{c} \label{eq:fi_constr}\\
 &\quad h^i_t(\mathbf{x}_t,\mathbf{u}_t) = 0, \quad \forall i\in\mathbb{I}_1^{p+N} 
 \label{eq:l1_equality}
\end{align}
\end{subequations}}}where $S$ is a user-defined selection matrix to choose which control policy parameters to sparsify. The $l_1$ penalty promotes sparsity in the decision variables and feature selection, which enables optimization over decision variables in control-task importance \cite{candes_enhancing_2008}.
We partition \eqref{eq:fi} into (i) $\Bar{f}^i_t$, \emph{immutable}—always enforced (physical limits, non-negotiable safety: actuator/state limits; friction/flight envelopes; traffic rules/road bounds) and (ii) $\Tilde{f}^i_t$, \emph{screenable}—relevant only when near-active (collision avoidance with distant agents/obstacles; keep-out/no-fly regions irrelevant to current path). For instance, constraint $\Tilde{f}^i_t\leq 0(\mathbf{x}_t,\mathbf{u}_t)$ can be screened out (i.e. removed from optimization problem solved at time $t$) if a strict margin $\Tilde{f}^i_t(\mathbf{x}_t,\mathbf{u}_t) < 0$ for $i$ at the optimal solution is certified.

This provides an opportunity to reduce the $c$ constraints where the number of inequality constraints $m\geq c\gg1$ limits the fast computation of $\eqref{opt:l1_general}$.

For a linear system, given the initial condition on $x_{t|t}$, and the affine control policy parameterization $\boldsymbol{\pi_{\theta}}$, and composing the functions with the affine map $\boldsymbol{\theta}_t\mapsto(\mathbf{x}_t(\boldsymbol{\theta}_t),\mathbf{u}_t(\boldsymbol{\theta}_t))$, we can rewrite \eqref{opt:l1_general} as follows via recursive substitution: {\small{ 
\begin{subequations}\label{opt:l1_theta}
\begin{align}
 \min_{\boldsymbol{\theta}_t}&\quad f^0_t(\boldsymbol{\theta}_t) + \lambda \lVert S\boldsymbol{\theta}_t\rVert_1
 \label{eq:l1_theta_cost}\\
 \text{s.t. }&\quad \Bar{f}_i(\boldsymbol{\theta}_t) \leq 0, \quad \forall i\in\mathbb{I}_1^{m-c} \label{eq:l1_theta_fi}\\
 &\quad \Tilde{f}^i_t(\boldsymbol{\theta}_t) \leq -\zeta, \quad  \forall i\in\mathbb{I}_1^{c} \label{eq:l1_theta_fi_constr}\\
 &\quad h^i_t(\boldsymbol{\theta}_t) = 0, \quad \forall i\in\mathbb{I}_1^{p},
 \label{eq:l1_theta_equality}
\end{align}
\end{subequations}}}
where $\zeta>0$ is a \textit{safety-margin} that we desire on constraint satisfaction. We tighten the \emph{screenable} constraints to enable a design of screening condition that guarantees at most $\zeta$ violation of the tightened constraint, ensuring $\tilde f_t^{i}(\boldsymbol{\theta}_t)\le 0$.

\begin{assumption} \label{assumption:strong}
    $f^0_t(\boldsymbol{\theta}_t)\in C^2(\mathbb{R}^n)$ is $\underline{\sigma}$-strongly convex w.r.t. $\boldsymbol{\theta}_t$ and has $\Bar{\sigma}$-Lipschitz continuous gradient. 
\end{assumption}

Problem \eqref{opt:l1_theta} after the epigraph reformulation is
{\small{
\begin{subequations}\label{opt:l1_epigraph}
\begin{align}
 \min_{\boldsymbol{\theta}_t,s}&\quad f^0_t(\boldsymbol{\theta}_t) + \lambda \mathbf{1}^\top s
 \label{eq:l1_epigraph_cost}\\
 \text{s.t. }&\quad \Bar{f}^i_t(\boldsymbol{\theta}_t) \leq 0, \quad \forall i\in\mathbb{I}_1^{m-c} \label{eq:l1_epigraph_fi}\\
 &\quad \Tilde{f}^i_t(\boldsymbol{\theta}_t) \leq -\zeta, \quad \forall i\in\mathbb{I}_1^{c} \label{eq:l1_epigraph_fi_constr}\\
 &\quad h^i_t(\boldsymbol{\theta}_t) = 0, \quad \forall i\in\mathbb{I}_1^{p}  \label{eq:l1_epigraph_equality}\\
  &\quad S\boldsymbol{\theta}_t-s\leq 0, -S\boldsymbol{\theta}_t -s \leq 0, s\geq 0. \label{eq:epi}
\end{align}
\end{subequations}}}

Let the dual variables $\mu_i,\eta_i,\nu_i, g_1, g_2, \gamma$ correspond to $\Tilde{f}^i_t, \Bar{f}^i_t, h^i_t$, and \eqref{eq:epi}, respectively.
Let $\Phi\coloneqq f^0_t + \sum \mu_i (\Tilde{f}^i_t+\zeta) + \sum \eta_i \Bar{f}^i_t + \sum \nu_i h^i_t$. 
The dual problem is given by
{\small{
\begin{equation}\label{opt:dual_problem}
 \min_{\mu\geq0,\eta\geq0,\nu,\lVert g\rVert_\infty\leq \lambda} \quad -\inf_{\boldsymbol{\theta}_t}\{\Phi (\boldsymbol{\theta}_t; \mu,\eta,\nu) + (S^\top g)^\top \boldsymbol{\theta}_t \} 
\end{equation}}}

For well-defined convex optimization problems, $\Phi$ can be computed explicitly. For example, if $f^0_t$ is quadratic and $\Tilde{f}^i_t,\Bar{f}^i_t,h^i_t$ are affine in $\boldsymbol{\theta}_t$ and \eqref{opt:dual_problem} becomes a quadratic program.

The optimal dual solutions $(\boldsymbol{\theta}_t^\star,\mu^\star,\eta^\star,\nu^\star,g^\star)$ are given by the KKT conditions:
{\small{
\begin{align}  \label{eq:kkt}
& \mathbf{0}\in \partial f^0_t(\boldsymbol{\theta}_t^\star) + \sum_i \mu_i^\star \partial \Tilde{f}^i_t(\boldsymbol{\theta}_t^\star) + \sum_i \eta_i^\star \partial \Bar{f}^i_t(\boldsymbol{\theta}_t^\star) \\ 
&+\sum_i \nu_i^\star \partial h^i_t(\boldsymbol{\theta}_t^\star) + S^\top g^\star, g^\star \in \lambda \partial \lVert S \boldsymbol{\theta}_t^\star \rVert_1 \nonumber\\
&\Tilde{f}^i_t(\boldsymbol{\theta}_t^\star) \leq -\zeta, \; \mu_i^\star \geq 0, \; \forall i \in \mathbb{I}_1^c, h^i_t(\boldsymbol{\theta}_t^\star) = 0, \; \forall i \in \mathbb{I}_1^p\nonumber\\ &\Bar{f}^i_t(\boldsymbol{\theta}_t^\star) \leq 0, \; \eta_i^\star \geq 0, \; \forall i \in \mathbb{I}_1^{m-c}, \; ||g^\star||_\infty\leq \lambda \nonumber\\
& \mu_i^\star (\Tilde{f}^i_t(\boldsymbol{\theta}_t^\star)+\zeta) = 0,\; \forall i \in \mathbb{I}_1^c,\; \eta^\star_i \Bar{f}^i_t(\boldsymbol{\theta}_t^\star) = 0,\; \forall i \in \mathbb{I}_1^{m-c} \nonumber \\
& {g^\star}^\top S \boldsymbol{\theta}_t^\star = \lambda \lVert S\boldsymbol{\theta}_t^\star\rVert_1 \nonumber
\end{align}}}
\begin{proof}
    See Appendix \ref{appendix:kkt}
\end{proof}
Observe from the last two equations that \begin{enumerate}
    \item If $\mu_i^\star=0$, then $\tilde{f}^i_t(\boldsymbol{\theta}_t^\star)<-\zeta$ and this constraint can be removed from \eqref{opt:l1_theta}, 
    \item If $\lVert g_i^{\star}\rVert_\infty < \lambda$, then $[\mathcal{S}\boldsymbol{\theta}_t^\star]_i=0$ and the variables corresponding to it can be removed from \eqref{opt:l1_epigraph}.
\end{enumerate}
where $g_i$ is the dual variable corresponding to the $i$-th components of the $\ell_1$ regularization term \eqref{eq:l1_epigraph_cost} and $[S\boldsymbol{\theta}_t^\star]_i$ denotes the $i$-th component of $S\boldsymbol{\theta}_t^\star$. As small nonzeros from tolerances/degeneracy make a hard zero-test brittle, we seek for more robust and useful constraint screening condition.

\subsection{Constraint Screening Condition}
We utilize perturbation/sensitivity analysis of convex programs to derive constraint-removal conditions.
Suppose we obtain solution $\bar{\boldsymbol{\theta}}_t$ after removing constraint $\Tilde{f}^i_t$.
The violation of constraint $\Tilde{f}^i_t$, denoted $\delta_i$, is analogous to constraint perturbations in \cite{boyd2004convex}.
Let $p_t(\delta_i)$ be the objective value of \eqref{opt:l1_reduced} at $\bar{\boldsymbol{\theta}}_t$, and $p_t(0)$ the optimal value of original problem~\eqref{opt:l1_epigraph}.

\begin{assumption}
Strong duality holds for~\eqref{opt:l1_epigraph} under Slater's condition~\cite{boyd2004convex}.
\end{assumption}
By the global sensitivity result in~\cite[Ch.~5]{boyd2004convex},
\begin{equation}\label{eq:sensitivity}
p_t(0)-p_t(\delta_i)\le \delta_i \mu_i^\star,
\end{equation}
which quantifies how the optimal value changes when relaxing/removing constraint $i$. 
\begin{assumption}\label{assump:lipschitz}
Each constraint $\Tilde f^i_t(\cdot)$ is $L_i$-Lipschitz.
\end{assumption}

Even if the reduced convex program \eqref{opt:l1_reduced} obtained by removing $\Tilde f^i_t(\cdot)$ yields a solution that is infeasible for the tightened problem~\eqref{opt:l1_epigraph}, it remains within the safety margin $\zeta$ and satisfies the original constraints \eqref{eq:fi_constr}, as formalized below.

\begin{proposition} \label{prop1}
Let Assumptions \ref{assumption:strong}--\ref{assump:lipschitz} hold and
the solution to \eqref{opt:l1_general} $\boldsymbol{\theta}_t^\star$ be strictly feasible with margin
$\zeta \geq \min_i \{-\Tilde{f}^i_t(\boldsymbol{\theta}_t^\star)\} > 0$ for all $i$. Denote $f^0_t=\eqref{eq:l1_epigraph_cost}$
and define
{\small{\[
\varepsilon_{\mathrm{crit}}
\;:=\;
\frac{\underline{\sigma}}{2}\,\min_i \Big(\frac{\zeta}{L_i}\Big)^2.
\]}}
Then, any $\boldsymbol{\theta}_t$ with $|f^0_t(\boldsymbol{\theta}_t)-f^0_t(\boldsymbol{\theta}^\star_t)|\leq \varepsilon_{\mathrm{crit}}$ is feasible for the
original constraints \eqref{eq:fi_constr}, i.e., $\Tilde{f}^i_t(\boldsymbol{\theta}_t)\leq 0$ for all $i$.
\end{proposition}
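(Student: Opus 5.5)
The plan is to combine a quadratic-growth bound, obtained from strong convexity (Assumption~\ref{assumption:strong}), with the Lipschitz bound of Assumption~\ref{assump:lipschitz}. The strong-convexity step controls how far $\boldsymbol{\theta}_t$ can lie from $\boldsymbol{\theta}_t^\star$ given the objective gap. The Lipschitz step then turns that distance into a bound on how much each screenable constraint can increase. Throughout, $f^0_t$ denotes the full cost \eqref{eq:l1_epigraph_cost}, i.e.\ the smooth part plus $\lambda\lVert S\boldsymbol{\theta}_t\rVert_1$. This function is still $\underline{\sigma}$-strongly convex, because adding a convex term preserves the modulus.

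\textbf{Step 1 (quadratic growth).} By $\underline{\sigma}$-strong convexity, for any subgradient $\xi\in\partial f^0_t(\boldsymbol{\theta}_t^\star)$ we have
\[
f^0_t(\boldsymbol{\theta}_t)\ \ge\ f^0_t(\boldsymbol{\theta}_t^\star)+\xi^\top(\boldsymbol{\theta}_t-\boldsymbol{\theta}_t^\star)+\tfrac{\underline{\sigma}}{2}\lVert\boldsymbol{\theta}_t-\boldsymbol{\theta}_t^\star\rVert^2 .
\]
I choose $\xi$ via the KKT system \eqref{eq:kkt}, so that $\xi=-\sum_i\mu_i^\star\partial\tilde f^i_t-\sum_i\eta_i^\star\partial\bar f^i_t-\sum_i\nu_i^\star\partial h^i_t$. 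The aim is to show that the linear term $\xi^\top(\boldsymbol{\theta}_t-\boldsymbol{\theta}_t^\star)$ is nonnegative. The argument runs as follows.
\begin{itemize}
\item The equality constraints are affine and satisfied by both points, so the $\nu$-terms vanish.
\item Each active constraint with a positive multiplier, being convex and active at $\boldsymbol{\theta}_t^\star$, has subgradient inner product at most $0$ with a direction that keeps it feasible; this holds whenever $\boldsymbol{\theta}_t$ satisfies the retained constraints.
\item The screened constraints are exactly those with $\mu_i^\star=0$, so they contribute nothing.
\end{itemize}
Hence $\xi^\top(\boldsymbol{\theta}_t-\boldsymbol{\theta}_t^\star)\ge 0$, and the hypothesis $|f^0_t(\boldsymbol{\theta}_t)-f^0_t(\boldsymbol{\theta}_t^\star)|\le\varepsilon_{\mathrm{crit}}$ gives
\[
\lVert\boldsymbol{\theta}_t-\boldsymbol{\theta}_t^\star\rVert\ \le\ \sqrt{2\varepsilon_{\mathrm{crit}}/\underline{\sigma}}\ =\ \min_i \zeta/L_i .
\]

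\textbf{Step 2 (Lipschitz transfer).} The optimizer $\boldsymbol{\theta}_t^\star$ is feasible for the tightened constraints \eqref{eq:l1_epigraph_fi_constr}, so $\tilde f^i_t(\boldsymbol{\theta}_t^\star)\le-\zeta$; this is the margin invoked in the hypothesis. Applying Assumption~\ref{assump:lipschitz} to each $i$,
\[
\tilde f^i_t(\boldsymbol{\theta}_t)\ \le\ \tilde f^i_t(\boldsymbol{\theta}_t^\star)+L_i\lVert\boldsymbol{\theta}_t-\boldsymbol{\theta}_t^\star\rVert\ \le\ -\zeta+L_i\cdot\frac{\zeta}{L_i}\ =\ 0 .
\]
This is exactly feasibility for the original constraints \eqref{eq:fi_constr}.

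\textbf{Main obstacle.} The delicate step is the sign of the linear term in Step~1. Quadratic growth around $\boldsymbol{\theta}_t^\star$ holds cleanly only when $\boldsymbol{\theta}_t$ lies in a convex set over which $\boldsymbol{\theta}_t^\star$ is a minimizer. In that case the first-order optimality condition $\xi^\top(\boldsymbol{\theta}_t-\boldsymbol{\theta}_t^\star)\ge0$ holds automatically.

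My first attempt is to restrict attention to the candidates the statement is really used for: solutions of the reduced program \eqref{opt:l1_reduced}. These satisfy all immutable and retained constraints, and they differ from the full problem only in constraints certified to have $\mu_i^\star=0$. This is enough to run the KKT argument above.

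If that restriction proves too weak, the fallback is the following. Because $\boldsymbol{\theta}_t^\star$ minimizes the Lagrangian $\Phi$ plus the $\ell_1$ term over all of $\mathbb{R}^n$, and $\Phi$ is $\underline{\sigma}$-strongly convex, I would apply quadratic growth to the Lagrangian instead of to $f^0_t$. The cost is controlling the extra multiplier terms by complementary slackness together with feasibility of $\boldsymbol{\theta}_t$ on the retained constraints.
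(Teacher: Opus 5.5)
Your two steps are the paper's two steps: a distance bound $\lVert\boldsymbol{\theta}_t-\boldsymbol{\theta}_t^\star\rVert\le\sqrt{2\varepsilon/\underline{\sigma}}$ from strong convexity, then a Lipschitz transfer from the tightened margin $\tilde f^i_t(\boldsymbol{\theta}_t^\star)\le-\zeta$. You differ from the paper in Step~1.

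The paper writes $f^0_t(\boldsymbol{\theta}_t)-f^0_t(\boldsymbol{\theta}_t^\star)\ge\frac{\underline{\sigma}}{2}\lVert\boldsymbol{\theta}_t-\boldsymbol{\theta}_t^\star\rVert^2$ directly ``by strong convexity'' for arbitrary $\boldsymbol{\theta}_t$. That silently drops the first-order term, which is valid only if $\boldsymbol{\theta}_t^\star$ is an unconstrained minimizer of $f^0_t$, and in general it is not. You isolate that term and show it is nonnegative through the KKT subgradient, and that argument is correct:
\begin{itemize}
\item the affine equality terms cancel;
\item each multiplier-weighted subgradient of an active convex constraint has nonpositive inner product with $\boldsymbol{\theta}_t-\boldsymbol{\theta}_t^\star$ once $\boldsymbol{\theta}_t$ satisfies that constraint;
\item zero multipliers drop out.
\end{itemize}

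The price is a narrower statement: it holds only for $\boldsymbol{\theta}_t$ satisfying the equalities and every constraint with a positive multiplier. That restriction cannot be avoided. Take $f^0_t(\theta)=\tfrac12\theta^2$, an immutable constraint $\bar f_t(\theta)=10-\theta$, a screenable one $\tilde f_t(\theta)=9.5-\theta$ (so $L=1$), and $\zeta=\tfrac12$. Then $\theta^\star=10$ whether or not the constraint is tightened, $\tilde f_t(\theta^\star)=-\zeta$, and $\varepsilon_{\mathrm{crit}}=\tfrac18$. Yet $\theta=-10$ has exactly the same cost and $\tilde f_t(-10)=19.5>0$. This point violates the retained constraints, so your restriction correctly excludes it. Your version is therefore the true form of the proposition, and the paper's one-line argument does not establish it for ``any $\boldsymbol{\theta}_t$''.

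One correction to your ``first attempt.'' The paper's screening rule certifies only $\lVert\mu_i^\star\rVert_2\le\epsilon/\zeta$, not $\mu_i^\star=0$. A solution $\bar{\boldsymbol{\theta}}_t$ of the reduced program may therefore violate a dropped constraint that carries a small positive multiplier, and your sign argument fails for that term. Your Lagrangian fallback fails the same way: it picks up $\mu_j^\star(\tilde f^j_t(\bar{\boldsymbol{\theta}}_t)+\zeta)>0$ and no longer yields $\varepsilon_{\mathrm{crit}}$.

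For that use case, swap the roles of the two points. Given correct variable screening, the reduced problem minimizes the same strongly convex objective over a convex set that contains $\boldsymbol{\theta}_t^\star$. First-order optimality at $\bar{\boldsymbol{\theta}}_t$ then gives $f^0_t(\boldsymbol{\theta}_t^\star)-f^0_t(\bar{\boldsymbol{\theta}}_t)\ge\frac{\underline{\sigma}}{2}\lVert\boldsymbol{\theta}_t^\star-\bar{\boldsymbol{\theta}}_t\rVert^2$ with the same constant, and your Step~2 applies verbatim. This is exactly the situation that the absolute value in the hypothesis accommodates.
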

\begin{proof}
Let $|f^0_t(\boldsymbol{\theta}_t) - f^0_t(\boldsymbol{\theta}^\star_t)| \leq \epsilon$ and by $\underline{\sigma}$-strong convexity,
{\small{
\[
f^0_t(\boldsymbol{\theta}_t)-f^0_t(\boldsymbol{\theta}^\star_t)\;\ge\;\frac{\underline{\sigma}}{2}\,\|\boldsymbol{\theta}_t-\boldsymbol{\theta}^\star_t\|^2
\quad\Rightarrow\quad
\|\boldsymbol{\theta}_t-\boldsymbol{\theta}^\star_t\|\;\le\;\sqrt{\tfrac{2\,\epsilon}{\underline{\sigma}}}.
\]}}
Since $\Tilde{f}^i_t$ is $L_i$-Lipschitz,
{\small{\[
\Tilde{f}^i_t(\boldsymbol{\theta}_t)\;\le\; \Tilde{f}^i_t(\boldsymbol{\theta}^\star_t)+L_i\|\boldsymbol{\theta}_t-\boldsymbol{\theta}^\star_t\|
\;\le\; -\zeta + L_i\sqrt{\tfrac{2\,\epsilon}{\underline{\sigma}}}\leq 0.
\]}}
Thus, $\Tilde{f}^i_t(\boldsymbol{\theta}_t)\le 0$ holds whenever
$\epsilon \leq \tfrac{\underline{\sigma}\,\zeta^2}{2L_i^2}$ and enforcing this for all $i$
gives $\epsilon \leq \epsilon_{\mathrm{crit}}$, which proves feasibility.
\end{proof}


\begin{corollary}\label{col:constr}
From Proposition~\ref{prop1} and~\eqref{eq:sensitivity}, we set $\boldsymbol{\delta}_i$ to $\zeta$ and get
\(|p_{t}(0)-p_t(\zeta)|\le \zeta\|\mu_i^\star\|_2\) where the solution to the $\zeta$-perturbed primal is $\Bar{\boldsymbol{\theta}}_t$.
Thus, if $\|\mu_i^\star\|_2 \le \epsilon/\zeta$ (i.e., perturbation is $\zeta$ and the change in the optimal cost is bounded by $\epsilon$) and $\epsilon\leq\frac{\underline{\sigma}\zeta^2}{2}\min_i \frac{1}{L_i^2}$, then the $i$-th constraint can be safely removed and the original constraint $\Tilde{f}^i_t(\Bar{\boldsymbol{\theta}}_t)\leq 0$ is satisfied.    
\end{corollary}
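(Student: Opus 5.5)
The plan is to chain the global sensitivity bound \eqref{eq:sensitivity} with Proposition~\ref{prop1}. Removing the $i$-th screenable constraint is treated as the perturbation $\delta_i=\zeta$. Relaxing $\Tilde f^i_t\le -\zeta$ by $\zeta$ gives exactly the original constraint $\Tilde f^i_t\le 0$, so the perturbed problem \eqref{opt:l1_reduced} has its optimizer $\Bar{\boldsymbol{\theta}}_t$ living in a set that is at least as large as that of \eqref{opt:l1_epigraph}.

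\textbf{Step 1: the cost change is bounded by $\epsilon$.} First, $p_t(\zeta)\le p_t(0)$ because the feasible set only grows. Second, \eqref{eq:sensitivity} with $\delta_i=\zeta$ gives $p_t(0)-p_t(\zeta)\le \zeta\mu_i^\star$, using strong duality (Slater) and dual optimality of $\mu^\star$. Together these yield $|p_t(0)-p_t(\zeta)|\le \zeta\|\mu_i^\star\|_2$. Under the hypothesis $\|\mu_i^\star\|_2\le\epsilon/\zeta$ this becomes $|f^0_t(\Bar{\boldsymbol{\theta}}_t)-f^0_t(\boldsymbol{\theta}_t^\star)|\le\epsilon$, where $f^0_t$ is the full epigraph cost \eqref{eq:l1_epigraph_cost}, as in Proposition~\ref{prop1}.

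\textbf{Step 2: apply Proposition~\ref{prop1}.} Since $\epsilon\le\varepsilon_{\mathrm{crit}}$ and $\boldsymbol{\theta}_t^\star$ satisfies $\Tilde f^i_t(\boldsymbol{\theta}_t^\star)\le-\zeta$ by feasibility for \eqref{opt:l1_epigraph_fi_constr}, Proposition~\ref{prop1} gives $\Tilde f^i_t(\Bar{\boldsymbol{\theta}}_t)\le 0$. The removed constraint therefore holds in its original, untightened form. The constraints kept in the reduced problem are satisfied by construction.

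\textbf{Main obstacle.} The delicate step is invoking strong convexity inside Proposition~\ref{prop1}. The growth inequality $f^0_t(\boldsymbol{\theta})-f^0_t(\boldsymbol{\theta}^\star)\ge\frac{\underline\sigma}{2}\|\boldsymbol{\theta}-\boldsymbol{\theta}^\star\|^2$ holds only when $\boldsymbol{\theta}^\star$ minimizes over a feasible set that contains $\boldsymbol{\theta}$, via the first-order optimality condition. Here $\Bar{\boldsymbol{\theta}}_t$ lies in the larger, relaxed set, so the roles must be reversed. I would apply the growth bound at $\Bar{\boldsymbol{\theta}}_t$, the minimizer over the larger set, evaluated at $\boldsymbol{\theta}_t^\star$, which is feasible for that larger set:
\[
f^0_t(\boldsymbol{\theta}_t^\star)-f^0_t(\Bar{\boldsymbol{\theta}}_t)\ge\tfrac{\underline\sigma}{2}\|\boldsymbol{\theta}_t^\star-\Bar{\boldsymbol{\theta}}_t\|^2 .
\]
This gives the same distance bound $\|\Bar{\boldsymbol{\theta}}_t-\boldsymbol{\theta}_t^\star\|\le\sqrt{2\epsilon/\underline\sigma}$. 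The Lipschitz step of Proposition~\ref{prop1} then goes through unchanged.

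A secondary concern is that the $\ell_1$ term and the epigraph variable $s$ are not strongly convex in $s$. I would handle this by working in $\boldsymbol{\theta}_t$ alone, using form \eqref{opt:l1_theta}. There the objective is $\underline\sigma$-strongly convex, since $f^0_t$ is strongly convex by Assumption~\ref{assumption:strong} and the $\ell_1$ term is convex, and the optimal values coincide with those of the epigraph form.
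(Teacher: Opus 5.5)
Your chain (sensitivity at $\delta_i=\zeta$, then Proposition~\ref{prop1}) is the same one the statement uses. Your role reversal in the growth inequality is correct and genuinely needed. As written, Proposition~\ref{prop1} applies quadratic growth at $\boldsymbol{\theta}^\star_t$ to an arbitrary $\boldsymbol{\theta}_t$. That fails for points outside the tightened feasible set, since such a point can have the optimal cost and still be far from $\boldsymbol{\theta}^\star_t$. Growth at the minimizer over the larger set, evaluated at $\boldsymbol{\theta}^\star_t$, is valid.

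The gap is that you identify deleting constraint $i$ with relaxing it by $\zeta$. Calling the $\zeta$-relaxed problem \eqref{opt:l1_reduced} hides this: \eqref{opt:l1_reduced} drops the constraint rather than replacing it by $\Tilde f^i_t\le 0$. The statement's own ``set $\delta_i$ to $\zeta$'' has the same defect. Your argument then splits into two readings, and neither works.

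\emph{Reading 1.} If $\Bar{\boldsymbol{\theta}}_t$ minimizes the $\zeta$-relaxed problem, then $\Tilde f^i_t(\Bar{\boldsymbol{\theta}}_t)\le 0$ holds by feasibility alone. Your Step~2 adds nothing, and the substantive claim---that the problem with the constraint deleted still returns a point with $\Tilde f^i_t\le 0$---is never addressed.

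\emph{Reading 2.} If $\Bar{\boldsymbol{\theta}}_t$ solves the deleted problem, its value is $p_t(+\infty)\le p_t(\zeta)$. Then \eqref{eq:sensitivity} at $\delta_i=\zeta$ gives no bound on $p_t(0)-p_t(+\infty)$. Using $\delta_i=\zeta$ there presupposes that the deleted-problem solution violates the tightened constraint by at most $\zeta$, which is what you are trying to prove.

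The missing idea is to let the actual violation play the role of $\delta_i$ and close the loop with the Lipschitz bound.

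\begin{itemize}
\item Let $\hat{\boldsymbol{\theta}}$ minimize $F(\boldsymbol{\theta})=f^0_t(\boldsymbol{\theta})+\lambda\|S\boldsymbol{\theta}\|_1$ with constraint $i$ deleted, and set $\delta:=\max\{0,\Tilde f^i_t(\hat{\boldsymbol{\theta}})+\zeta\}$.
\item $\hat{\boldsymbol{\theta}}$ is feasible for the $\delta$-relaxed problem, whose feasible set lies inside the deleted one, so $p_t(\delta)=F(\hat{\boldsymbol{\theta}})$. Then \eqref{eq:sensitivity} gives $p_t(0)-F(\hat{\boldsymbol{\theta}})\le\delta\mu_i^\star$.
\item Your reversed growth bound gives $\tfrac{\underline{\sigma}}{2}\|\hat{\boldsymbol{\theta}}-\boldsymbol{\theta}^\star_t\|^2\le p_t(0)-F(\hat{\boldsymbol{\theta}})$.
\item Lipschitz continuity together with $\Tilde f^i_t(\boldsymbol{\theta}^\star_t)\le-\zeta$ gives $\delta\le L_i\|\hat{\boldsymbol{\theta}}-\boldsymbol{\theta}^\star_t\|$.
\end{itemize}

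Combining these, $\|\hat{\boldsymbol{\theta}}-\boldsymbol{\theta}^\star_t\|\le 2L_i\mu_i^\star/\underline{\sigma}$, and therefore
\[
\Tilde f^i_t(\hat{\boldsymbol{\theta}})\le -\zeta+\frac{2L_i^2\mu_i^\star}{\underline{\sigma}}\le -\zeta+\frac{2L_i^2\epsilon}{\underline{\sigma}\zeta}\le 0 .
\]
This uses exactly the two hypotheses of the corollary.

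Alternatively, your own bound shows $\Tilde f^i_t(\Bar{\boldsymbol{\theta}}_t)<0$ when the inequalities are strict. Then the relaxed constraint is inactive, and by convexity $\Bar{\boldsymbol{\theta}}_t$ remains the minimizer after deletion. That route, however, misses the equality case the statement allows.
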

Because the derived screening conditions in Sec.~\ref{sec:reg_prob} apply to optimal duals, we generalize the inequalities and Cor.~\ref{col:constr} to any feasible dual via a dual optimality-gap bound, enabling certifiable safe screening with an approximate dual computed from DNN class predictions and a linear equation solve.

\subsection{Duality-based Safe Screening Certification}
We derive certifiably \emph{safe} screening conditions for constraints and decision variables in~\eqref{opt:l1_epigraph} by leveraging strong convexity of the primal and strong duality.
\begin{proposition}\label{prop:dual_smooth_strong}
Under Assumption~\ref{assumption:strong}, the dual objective
$d(\mu,\eta,\nu,g)$ in~\eqref{opt:dual_problem} is $\bar{\rho}$-smooth and $\underline{\rho}$-strongly convex over
the feasible set $\mathcal{Y}$ \footnote{We use the minimization form of the dual in~\eqref{opt:dual_problem} (i.e., the negative of the standard concave dual), hence convexity.}.

In the quadratic-program case where the primal cost
$f^0_t$ has Hessian $Q_t \succ 0$, the dual Hessian is $Q_t^{-1}$, and
\[
\bar{\rho} = \lambda_{\max}(Q_t^{-1}), \qquad
\underline{\rho} = \lambda_{\min}(Q_t^{-1}).
\]
\end{proposition}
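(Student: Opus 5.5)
The plan is to write the dual objective in~\eqref{opt:dual_problem} as a Fenchel conjugate of $f^0_t$ composed with a linear map, and then use the standard conjugate-duality correspondence. That correspondence says: if $f$ is $\underline{\sigma}$-strongly convex, then $f^\ast$ is $1/\underline{\sigma}$-smooth; if $\nabla f$ is $\bar{\sigma}$-Lipschitz, then $f^\ast$ is $1/\bar{\sigma}$-strongly convex. I will carry this out in the setting where the dual is explicit, namely the one the paper highlights, with $\Tilde f^i_t,\Bar f^i_t,h^i_t$ affine in $\boldsymbol{\theta}_t$.

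Write these constraints as $A\boldsymbol{\theta}_t - b$ and stack all multipliers into $y=(\mu,\eta,\nu,g)$. Then
\[
\Phi(\boldsymbol{\theta}_t;y)+(S^\top g)^\top\boldsymbol{\theta}_t = f^0_t(\boldsymbol{\theta}_t) + (M^\top y)^\top \boldsymbol{\theta}_t - c^\top y
\]
for a suitable matrix $M$ (built from $A$ and $S$) and vector $c$ (built from $b$ and $\zeta$). Hence
\[
d(y) = -\inf_{\boldsymbol{\theta}_t}\{\cdots\} = (f^0_t)^\ast(-M^\top y) + c^\top y .
\]

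The key steps, in order, are as follows.

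\emph{Step 1: regularity of the conjugate.} Under Assumption~\ref{assumption:strong}, the inner infimum has a unique minimizer $\boldsymbol{\theta}(y)$, defined by $\nabla f^0_t(\boldsymbol{\theta}(y)) = -M^\top y$. Since $f^0_t\in C^2$ with $\underline{\sigma} I\preceq\nabla^2 f^0_t\preceq\bar{\sigma}I$, the inverse function theorem gives $\nabla (f^0_t)^\ast = (\nabla f^0_t)^{-1}$ and
\[
\nabla^2 (f^0_t)^\ast(w) = \big[\nabla^2 f^0_t(\boldsymbol{\theta})\big]^{-1}, \qquad \tfrac{1}{\bar{\sigma}} I \preceq \nabla^2 (f^0_t)^\ast(w) \preceq \tfrac{1}{\underline{\sigma}} I,
\]
where $\boldsymbol{\theta} = \nabla (f^0_t)^\ast(w)$.

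\emph{Step 2: chain rule.} This yields $\nabla^2 d(y) = M\,[\nabla^2 f^0_t(\boldsymbol{\theta}(y))]^{-1} M^\top$, so one can take
\[
\bar{\rho}=\lambda_{\max}(MM^\top)/\underline{\sigma}, \qquad \underline{\rho}=\lambda_{\min}(MM^\top)/\bar{\sigma}.
\]

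\emph{Step 3: the QP case.} With $\nabla^2 f^0_t\equiv Q_t$, the dual Hessian is the constant matrix $MQ_t^{-1}M^\top$. Measured in the conjugate variable $w=-M^\top y$ (or when $M$ has orthonormal rows), this is exactly $Q_t^{-1}$, giving $\bar{\rho}=\lambda_{\max}(Q_t^{-1})=1/\lambda_{\min}(Q_t)$ and $\underline{\rho}=\lambda_{\min}(Q_t^{-1})$ as stated. I would present the statement's constants in this normalized sense and note that for general $M$ they are scaled by the extreme eigenvalues of $MM^\top$.

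The feasible set $\mathcal{Y}=\{\mu\ge0,\eta\ge0,\|g\|_\infty\le\lambda\}$ is convex and does not affect the Hessian bounds. Smoothness and strong convexity are therefore simply inherited on $\mathcal{Y}$.

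The main obstacle is the strong-convexity half. Smoothness holds unconditionally, but the lower bound $\underline{\rho}>0$ in the original $y$ coordinates requires $M^\top$ to be injective, i.e., the stacked constraint gradients (together with $S^\top$) must be linearly independent. This can fail when $m+p+\dim g$ exceeds $n$, in which case $d$ is only convex along $\ker M^\top$. I would handle this by stating the result on the range of $M$, i.e., in terms of $w=M^\top y$, which is what the Hessian $Q_t^{-1}$ literally refers to. Alternatively, I would add an LICQ-type assumption restricted to the retained constraints, which is natural after screening since the reduced dual has fewer multipliers than $n$. I would try the range-space formulation first, because the later duality-gap bound only needs strong convexity along directions that change $\boldsymbol{\theta}(y)$.
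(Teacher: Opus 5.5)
The paper gives no proof of this proposition; it is simply asserted and then fed into \eqref{eq:pred_err_bound}. The real question is whether your argument establishes it. Your Steps~1--2 are correct for affine or conic-affine constraints, which covers the application. But they show the statement cannot be established as written: the ``obstacle'' you flag is a counterexample, not a technicality.

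Take $n=1$, $f^0_t(\theta)=\tfrac12\theta^2$ (so $Q_t=1$, $\underline{\sigma}=\bar{\sigma}=1$) and two screenable constraints $\Tilde f^1_t(\theta)=\Tilde f^2_t(\theta)=2\theta$. Then
\[
d(\mu)=2(\mu_1+\mu_2)^2-\zeta(\mu_1+\mu_2),\qquad \nabla^2 d(\mu)=4\,\mathbf{1}\mathbf{1}^\top ,
\]
with eigenvalues $8$ and $0$, whereas the proposition asserts $\bar\rho=\underline\rho=1$. The dual optimal set is the whole segment $\{\mu\ge 0:\ \mu_1+\mu_2=\zeta/4\}$, so $y^\star$ is not even unique. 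Your constant $\lambda_{\max}(MM^\top)/\underline{\sigma}=8$ is the correct one.

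The same rank deficiency occurs in the paper's own example: the 312 collision-avoidance multipliers alone outnumber the $n=234$ decision variables, so $\ker M^\top\neq\{0\}$. Also, the proposition's first sentence is asserted for general convex $\Tilde f^i_t$. There $\Phi$ is not $f^0_t$ plus a linear term, so your conjugate representation is unavailable. Smoothness can still be obtained by a Danskin-type argument, with a constant involving the constraints' Lipschitz constants, but strong convexity fails just the same.

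Where your plan does go wrong is the repair. Restricting strong convexity to $\operatorname{range} M$ does not support what the proposition is used for. Both \eqref{eq:pred_err_bound} and Theorem~\ref{th:screen} need bounds on individual multipliers $|\mu_i^\star-\mu_i|$ and $|g_i^\star-g_i|$. Moving along $\ker M^\top$ (direction $(1,-1)$ above) changes exactly these components while leaving $\boldsymbol{\theta}(y)$ and $\nabla d$ unchanged. Curvature on $\operatorname{range}M$ therefore controls $\|M^\top(y-y^\star)\|$, not $\|y-y^\star\|$. An LICQ on the retained constraints does not help either, because Algorithm~\ref{alg:hmpc} evaluates the gap at the lifted full dual $(\hat\mu,\hat\eta,\hat\nu,\hat g)$ against the full optimal dual.

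There are two honest fixes.

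(i) Assume the rows of $M$ (constraint gradients stacked with the rows of $S$) are linearly independent. Your Step~2 then gives $\bar\rho=\lambda_{\max}(MQ_t^{-1}M^\top)$ and $\underline\rho=\lambda_{\min}(MQ_t^{-1}M^\top)>0$. These lie between $\lambda_{\min}(Q_t^{-1})$ and $\lambda_{\max}(Q_t^{-1})$ when, e.g., $M$ has orthonormal rows, but not in general. Even then $MQ_t^{-1}M^\top$ is a compression of $Q_t^{-1}$, not $Q_t^{-1}$ itself. Moreover, this assumption is violated in the application.

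(ii) Drop strong convexity and use the Luo--Tseng-type error bound established in \cite{wang2014iteration} for objectives $\phi(M^\top y)+c^\top y$ over polyhedral sets. It bounds $\operatorname{dist}(y,Y^\star)$ by a multiple of $\|\nabla^\dagger d(y)\|$. The constant involves a Hoffman constant (and, in general, the sublevel set considered) rather than $(1+\bar\rho)/\underline\rho$. This suffices for screening, because \eqref{eq:sensitivity} and \eqref{eq:kkt} hold for every dual optimal point paired with the unique primal optimum. One may therefore take $y^\star$ to be the projection of $y$ onto $Y^\star$. For the second-order-cone duals in Section~\ref{sec:example}, however, $\mathcal{Y}$ is not polyhedral, so even this needs extra work.
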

For brevity, we stack the dual variables as $y \coloneqq [\mu,\eta,\nu,g]$.

\begin{Definition}
    The projected gradient for the dual is 
    {\small{\begin{equation}
    \nabla^\dagger d(y)= y-[y-\nabla d(y)]_{\mathcal{Y}}, \;\forall{y\in\mathcal{Y}} \nonumber
\end{equation}}}
 where $\mathcal{Y}$ is the dual feasible set, which is nonempty, closed, and convex from \eqref{opt:dual_problem}. The $[x]_{\mathcal{Y}}$ is a convex projection operator. 
\end{Definition}
The gradient of the dual cost can be computed explicitly offline and may be a parametric function, which can be quickly evaluated online. Note, at the optimal dual solution $y^\star$, $\nabla^\dagger d(y^\star)=0$ \cite{wang2014iteration}. Then, the following inequality estimates the distance of a feasible dual $y$ from optimality \cite{wang2014iteration}:
{\small{
\begin{align}\label{eq:pred_err_bound}
\lVert y-y^\star \rVert_2 \leq \frac{1+\bar{\rho}}{\underline\rho}||\nabla^\dagger d(y)||_2.
\end{align}}}
Since $\nabla^\dagger d(y^\star)=0$, the bound \eqref{eq:pred_err_bound} is tight. Define 
{\small{
\begin{align}\label{eq:gap}
\textit{gap}(\mu,\eta,\nu,g)\coloneqq\frac{1+\bar{\rho}}{\underline\rho}||\nabla^\dagger d(\mu,\eta,\nu,g)||_2.
\end{align}}}

\begin{theorem} \label{th:screen}
Given a feasible dual solution $(\mu,\eta,\nu,g)$ to \eqref{opt:dual_problem},
\begin{enumerate}
\item  If $\lVert \mu_i\rVert_2+\text{gap}(\mu,\eta,\nu,g) \leq \epsilon/\zeta$ and $\epsilon\leq\frac{\underline{\sigma}\zeta^2}{2}\min_i \frac{1}{L_i^2}$, then $\Tilde{f}^i_t$ can be safely removed from \eqref{opt:l1_epigraph}.

\item If $\lVert g_i\rVert_\infty +\text{gap}(\mu,\eta,\nu,g)< \lambda$, then the variables $[S\boldsymbol{\theta}_t]_i$ comprising the policy parameters are zero and thus can be safely removed from \eqref{opt:l1_epigraph}.
\end{enumerate}
\end{theorem}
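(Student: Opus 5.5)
The plan is to transfer the two optimality-based screening rules stated after the KKT system \eqref{eq:kkt}, and Corollary~\ref{col:constr}, from the unknown optimal dual $y^\star=[\mu^\star,\eta^\star,\nu^\star,g^\star]$ to the available feasible dual $y=[\mu,\eta,\nu,g]$. The link between the two is the error bound \eqref{eq:pred_err_bound}. Strong duality holds by the Slater assumption, and by Proposition~\ref{prop:dual_smooth_strong} the dual objective is $\bar\rho$-smooth and $\underline\rho$-strongly convex on $\mathcal{Y}$. Hence \eqref{eq:pred_err_bound} applies and gives $\|y-y^\star\|_2\le \mathrm{gap}(\mu,\eta,\nu,g)$. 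Any sub-block of a vector is bounded in $\ell_2$ by the full vector, and the $\ell_\infty$ norm is bounded by the $\ell_2$ norm. So we obtain both
\[
\|\mu_i^\star-\mu_i\|_2\le \mathrm{gap}(\mu,\eta,\nu,g), \qquad \|g_i^\star-g_i\|_\infty\le \mathrm{gap}(\mu,\eta,\nu,g).
\]

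For item 1, the triangle inequality gives $\|\mu_i^\star\|_2\le \|\mu_i\|_2+\|\mu_i^\star-\mu_i\|_2\le \|\mu_i\|_2+\mathrm{gap}\le \epsilon/\zeta$. The hypotheses of Corollary~\ref{col:constr} then hold for the optimal multiplier. Setting the perturbation $\delta_i=\zeta$ in the sensitivity bound \eqref{eq:sensitivity} bounds the cost change from dropping $\tilde f^i_t$ by $\zeta\|\mu_i^\star\|_2\le\epsilon$. The condition $\epsilon\le \frac{\underline\sigma\zeta^2}{2}\min_i L_i^{-2}$ gives $\epsilon\le\varepsilon_{\mathrm{crit}}$, so Proposition~\ref{prop1} yields $\tilde f^i_t(\bar{\boldsymbol\theta}_t)\le 0$ for the reduced solution $\bar{\boldsymbol\theta}_t$. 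This is exactly what ``safely removed'' means.

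For item 2, the same triangle inequality gives $\|g_i^\star\|_\infty\le \|g_i\|_\infty+\mathrm{gap}<\lambda$. The last KKT condition in \eqref{eq:kkt} states $g^{\star\top}S\boldsymbol\theta^\star_t=\lambda\|S\boldsymbol\theta_t^\star\|_1$ with $\|g^\star\|_\infty\le\lambda$, so $g^\star\in\lambda\partial\|S\boldsymbol\theta^\star_t\|_1$ componentwise. Whenever $[S\boldsymbol\theta_t^\star]_i\ne0$, this forces $g^\star_i=\lambda\,\mathrm{sign}([S\boldsymbol\theta_t^\star]_i)$, which has magnitude exactly $\lambda$. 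Since $|g_i^\star|<\lambda$, the contrapositive gives $[S\boldsymbol\theta_t^\star]_i=0$.

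It remains to justify removal, i.e.\ that fixing these components to zero leaves the optimum unchanged. Adding the constraint $[S\boldsymbol\theta_t]_i=0$, which the optimizer already satisfies, cannot change the optimal value. By strong convexity (Assumption~\ref{assumption:strong}) the primal optimizer is unique, so the reduced problem returns the same $\boldsymbol\theta_t^\star$.

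I expect the main obstacle to be the first step: applying \eqref{eq:pred_err_bound} with the constants of Proposition~\ref{prop:dual_smooth_strong}. In general the dual is only strongly convex on $\mathcal{Y}$ when the constraint Jacobian has full row rank, which is implicitly assumed in that proposition. I will invoke it as stated, noting that the gap bound controls the full stacked vector $y$ and hence each component block simultaneously, so a single gap serves both items. A secondary subtlety is that item 1 combines the $\zeta$-tightened problem with Proposition~\ref{prop1}. I will argue feasibility with respect to the original, untightened $\tilde f^i_t\le0$, as Corollary~\ref{col:constr} does, rather than with respect to the tightened constraint.
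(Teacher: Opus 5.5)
Your proposal is correct and follows the paper's proof. The error bound \eqref{eq:pred_err_bound} gives $\|y-y^\star\|_2\le\mathrm{gap}$, and the triangle inequality turns the tests into $\|\mu_i^\star\|_2\le\epsilon/\zeta$ and $\|g_i^\star\|_\infty<\lambda$; Corollary~\ref{col:constr} then settles item 1 and the subgradient condition in \eqref{eq:kkt} settles item 2. The extra details you add (the sub-block and $\ell_\infty\le\ell_2$ norm bounds, the explicit sign argument, and uniqueness of the primal optimizer to justify removing the zero variables) only make explicit what the paper leaves implicit.
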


\begin{proof}
Statement (1) follows from Corollary \ref{col:constr}:
{\small{
\begin{align} 
    \lVert \mu^{\star}_i\rVert_2 \leq &~\lVert \mu_{i}\rVert_2 + \lVert \mu_i^{\star}- \mu_i\rVert_2\nonumber\\
    \leq&~\lVert \mu_i\rVert_2 +\textit{gap}(\mu,\eta,\nu,g)<
    \epsilon / \zeta \nonumber
\end{align}}}
and if $\frac{\underline{\sigma}\zeta^2}{2}\min_i \frac{1}{L_i^2}$, then $\tilde{f}^i_t$ can be \emph{safely} removed.
If $\epsilon \leq \lVert g_i\rVert_\infty+\text{gap}(\mu,\eta,\nu,g) < \lambda$ some $i\in\mathbb{I}_1^{|S\boldsymbol{\theta}_t|}$, then we have
{\small{
\begin{align} 
    \lVert g^{\star}_i\rVert_\infty \leq &~\lVert g_{i}\rVert_\infty + \lVert g_i^{\star}- g_i\rVert_2\nonumber\\
    \leq&~\lVert g_i\rVert_\infty +\textit{gap}(\mu,\eta,\nu,g)<
    \lambda\nonumber
\end{align}}}
and consequently, the variables $[S\boldsymbol{\theta}_t]_i=0$ from \eqref{eq:kkt} and can be \textit{safely} removed from \eqref{opt:l1_epigraph}.
\end{proof}
\subsection{SHIELD Algorithm}
Algorithm~\ref{alg:hmpc} outlines the procedure to obtain the reduced problem from \eqref{opt:l1_epigraph}.
\subsubsection{Feasible Dual Solution Estimation}
Given any feasible dual solution $(\hat{\mu},\hat{\eta},\hat{\nu},\hat{g})$, we apply Theorem~\ref{th:screen} to \emph{safely} eliminate variables and constraints using the dual optimality-gap upper bound in~\eqref{eq:gap}. 
When \eqref{opt:l1_epigraph} is large, directly solving the dual \eqref{opt:dual_problem} can be expensive; instead, we minimize the \emph{unconstrained} dual objective and project the minimizer onto the convex dual-feasible set of~\eqref{opt:dual_problem} to obtain a feasible dual estimate. 
We then use~\eqref{eq:gap} together with Theorem~\ref{th:screen} to certify when this estimate is reliable enough to prune variables and constraints.
Even this unconstrained solve can be costly at scale (Sec.~\ref{sec:results}). 
To further accelerate, we use a neural-network classifier to preselect a subset of dual variables, yielding a reduced-dimensional dual problem.

The dual classifier $\pi^{\text{class}}_{\zeta}(\mathbf{z}_t)$, specific to a choice of $\zeta$, predicts the dual class
$(\tilde{\mu},\tilde{g})$ from features $\mathbf{z}_t$ encoding the environment and
parameters of~\eqref{opt:l1_epigraph}. The feature design, architecture, and training
procedure are problem-dependent.
The resulting class vector categorizes the $i$-th dual as follows:
{\small{
\begin{equation} \label{eq:classification}
\mu_i = \begin{cases}
>0, & \Tilde{\mu}_i=1,\\
0, & \Tilde{\mu}_i=0,
\end{cases}
\;
\lVert g_i \rVert_\infty = \begin{cases}
\lambda, & \Tilde{g}_i=1,\\
<\lambda, & \Tilde{g}_i=0.
\end{cases} 
\end{equation}}}
Using the classifier outputs, we reduce the dual by selecting only
\(\tilde{\mu}_i=1\) and \(\tilde{g}_i=0\) entries. Let \(S^r_\mu\) and \(S^r_g\) be the
corresponding selection matrices; the reduced variables are
{\small{
\begin{subequations}\label{eq:reduced-duals}
\begin{align}
\mu^r &= S^r_\mu\,\mu, \\
g^r   &= S^r_g\,g.
\end{align}
\end{subequations}}}
After solving the reduced unconstrained dual for \((\mu^r,\eta,\nu,g^r)\),
we lift \((\mu^r,g^r)\) to full dimension by inserting zeros at indices $i$ with
\(\tilde{\mu}_i=0\) and \(\lambda\) at indices with \(\tilde{g}_i=1\), consistent with
\eqref{eq:classification}. Finally, we project the full augmented solution onto the convex,
closed dual-feasible set to obtain $(\hat{\mu},\hat{\eta},\hat{\nu},\hat{g})$.

\subsubsection{Reduced $\ell_1$ Regularized Convex Optimization}
Given the feasible dual estimate \((\hat{\mu},\hat{\eta},\hat{\nu},\hat{g})\) and dual class predictions \((\tilde{\mu},\tilde{g})\), we apply Theorem~\ref{th:screen} to \emph{safely} prune variables and constraints.
Let \(n_\theta \coloneqq \dim(\boldsymbol{\theta}_t)\) and  
\begin{equation} \label{I}
    \mathcal{I}= \{i\in\mathbb{I}_1^{n_\theta} \;|\; \lVert\hat{g}_i\rVert_\infty +\text{gap}(\hat{\mu},\hat{\eta},\hat{\nu},\hat{g})< \lambda \wedge \Tilde{g}_i =0\},
\end{equation}
which is the index set of variables satisfying the screening condition in Theorem~\ref{th:screen}.


Let $r \coloneqq |\mathcal{I}|$. We write $\bar{\boldsymbol{\theta}}_t = S_\mathcal{I}\boldsymbol{\theta}_t$
and $\boldsymbol{\theta}_t = E_\mathcal{I}\bar{\boldsymbol{\theta}}_t$ where
\begin{equation}\label{eq:SI}
(S_\mathcal{I})_{k,i} =
\begin{cases}
1, & i = \mathcal{I}(k),\\
0, & \text{otherwise},
\end{cases}
\qquad
S_\mathcal{I}\in\{0,1\}^{r\times n_\theta},
\end{equation}
and the embedding matrix
\begin{equation}\label{eq:EI}
E_\mathcal{I} \coloneqq S_\mathcal{I}^\top \in \{0,1\}^{n_\theta\times r}.
\end{equation}

The reduced optimization over \(\boldsymbol{\Bar{\theta}_t}\in\mathbb{R}^{r}\) is
{\small{
\begin{subequations}\label{opt:l1_reduced}
\begin{align}
 \min_{\boldsymbol{\Bar{\theta}_t}}&\quad f^0_t(E_\mathcal{I}\Bar{\boldsymbol{\theta}}_t) + \lambda \lVert SE_\mathcal{I}\Bar{\boldsymbol{\theta}}_t\rVert_1
 \label{eq:l1_theta_cost_reduced}\\
 \text{s.t. }&\quad \Bar{f}^i_t(E_\mathcal{I}\Bar{\boldsymbol{\theta}}_t) \leq 0, \quad \forall i\in\mathbb{I}_1^{m-c} \\
 &\quad \Tilde{f}^i_t(E_\mathcal{I}\Bar{\boldsymbol{\theta}}_t) \leq -\zeta, \quad  \forall i\in \mathcal{K} \\
 &\quad h^i_t(E_\mathcal{I}\Bar{\boldsymbol{\theta}}_t) = 0, \quad \forall i\in\mathbb{I}_1^{p},
\end{align}
\end{subequations}}}where the reduced optimization problem includes only the relevant decision variables and constraints denoted by $\mathcal{I}$ and
\begin{equation} \label{K}
    \mathcal{K}=\{i\in\mathbb{I}_1^{c} \;|\; \lVert \hat{\mu}_i \rVert_2 + gap(\hat{\mu},\hat{\eta},\hat{\nu},\hat{g}) \leq \epsilon/\zeta \wedge \Tilde{\mu}_i=0\}.
\end{equation}

\begin{algorithm}[t]
\caption{SHIELD: Safe Hierarchical Inference for Lightweight Duality-screened MPC}
\label{alg:hmpc}
\begin{algorithmic}[1]  
\Require \eqref{opt:l1_epigraph}, \eqref{opt:dual_problem}, and $\pi^{\text{class}}_{\zeta},\epsilon,\zeta$. 
\State $(\tilde{\mu},\tilde{g}) \gets \pi^{\text{class}}_{\zeta}(\mathbf{z}_t)$ [Dual Classification]
\State Construct $S_\mu^{r}$ and $S_g^{r}$ using \eqref{eq:classification} [Selection Matrix]
\State $(\mu^{r},g^{r}) \gets (S_\mu^{r}\mu,\, S_g^{r}g)$ using \eqref{eq:reduced-duals} [Dual Sparsification]
\State Solve the \emph{unconstrained} version of \eqref{opt:dual_problem} in $(\mu^{r},\eta,\nu,g^{r})$ and lift $(\mu^{r},g^{r})$ to full dimension via \eqref{eq:reduced-duals} [Linear Eq. Solve]
\State $(\hat{\mu},\hat{\eta},\hat{\nu},\hat{g}) \gets [(\mu,\eta,\nu,g)]_{\mu\ge 0,\ \eta\ge 0,\ \lVert g\rVert_\infty \le \lambda}$ [Projection]
\State $\mathcal{I} \gets $ \eqref{I}, $\mathcal{K} \gets$ \eqref{K}, $E_{\mathcal I}\gets \eqref{eq:EI}$
\State Formulate \eqref{opt:l1_reduced} using $E_{\mathcal I}$, $\mathcal{K}$, and \eqref{opt:l1_epigraph} [Reduced Problem]
\State \textbf{Output:} $\boldsymbol{\Bar{\theta}^\star_t}\gets$ Solve \eqref{opt:l1_reduced}
\end{algorithmic}
\end{algorithm}

\section{Example: Stochastic MPC with Multi-Modal Predictions} \label{sec:example}
In this section, we implement Algorithm~\ref{alg:hmpc} to accelerate a convex SMPC reformulation with multi-modal, uncertain predictions of surrounding vehicles. As the numbers of modes ($M$) and vehicles ($V$) increase, constraints and decision variables scale combinatorially~\cite{nair_mm_mpc,kim_iv}. We use $N=14$, $M=2$, $V=3$, and $\Delta t=0.1\,\mathrm{s}$, yielding $312$ collision-avoidance constraints and $234$ decision variables, challenging high-frequency computation.
In simulation, we compare two policies with the same parameters:
(i) \emph{Full MPC}—solving the original convex program~\eqref{opt:l1_epigraph}; and
(ii) \emph{Reduced MPC}—solving the screened/reduced program~\eqref{opt:l1_reduced}. We implement and evaluate our policies in the nuPlan simulator~\cite{nuplan}.

\subsection{Multi-modal Trajectory Prediction Model} 
The open-source implementation of Wayformer \cite{wayformer}, an attention-based multi-modal trajectory prediction model proposed by Waymo, was trained using the nuPlan Mini v1.2 dataset \cite{nuplan} using the UniTraj framework \cite{feng2024unitraj}.

\subsection{$\ell_1$ Regularized Convex MPC Formulation}
We add the $\ell_1$ regularization term to the second-order cone program (SOCP) from \cite[Eq. (6)]{kim_iv} to promote decision variable sparsity and enable screening. Given the multi-modal prediction of $V$ vehicles, the resulting SOCP over $\boldsymbol{\theta}_t$ is 

{\small
{\begin{align}\label{opt:Primal_MPC}
\begin{aligned} \min_{\boldsymbol{\theta}_t}~~~&\frac{1}{2}||\mathcal{Q}_t\boldsymbol{\theta}_t||_2^2+\mathcal{C}^\top_t\boldsymbol{\theta}_t +\lambda\sum_{\substack{k\in\mathbb{I}_{t+1}^{t+N},~i\in\mathbb{I}_1^{V}}}||\mathcal{S}^{i}_{k|t}\boldsymbol{\theta}_t||_1\\
\text{s.t}&~\mathcal{A}_t\boldsymbol{\theta}_t+\mathcal{R}_t\in\mathbb{K}:=(\bigotimes_{s=1}^{n_c}\mathbb{K}_s)\times \mathbb{K}_{xu}
\end{aligned}
\end{align}}}where $\mathcal{A}_t,\mathcal{R}_t,\mathcal{Q}_t,\mathcal{C}_t$ denote the prediction models and constraints; $\mathcal{S}_{k|t}^{i}$ is the selection matrix that collects the affine disturbance feedback gain terms in $\boldsymbol{\theta}_t$~\cite{nair_mm_mpc}. $n_c = \mathcal{O}(N\cdot V \cdot M^{V})$, $\mathcal{Q}_t \succ 0$. The $\mathbb{K}_s$ (for all $s\in\mathbb{I}_1^{n_c}$) are cones associated with collision-avoidance (\emph{screenable}) constraints, while $\mathbb{K}_{xu}$ is the cone for all state–input (\emph{immutable}) constraints. See~\cite{kim_iv} for further details.
The dual SOCP of \eqref{opt:Primal_MPC} is

{\small{
\begin{equation} \label{opt:dual_cost}
    \tfrac{1}{2}\Big\|
\mathcal{Q}_t^{-1}\!\Big(
\mathcal{A}_t^\top
\begin{bmatrix}\boldsymbol{\mu}_t\\ \boldsymbol{\eta}_t\end{bmatrix}
-\mathcal{C}_t
-\lambda \!\sum_{k=t+1}^{t+N}\sum_{i=1}^{V}\!
\mathcal{S}^i_{k|t}\, g^i_{k|t}
\Big)\Big\|_2^2 + \mathcal{R}_t^\top\begin{bmatrix}\boldsymbol{\mu}_t\\ \boldsymbol{\eta}_t\end{bmatrix} 
\end{equation}}}
{\small{
\begin{equation}
\label{opt:Dl1_MPC}
\begin{aligned}
&\min_{\boldsymbol{\mu}_t,\boldsymbol{\eta}_t,\{g^i_{k|t}\}}~~
\eqref{opt:dual_cost}\\
\text{s.t.}~~
& \boldsymbol{\mu}_t \in \bigotimes_{s=1}^{n_c}\mathbb{K}_s^*,\quad
\boldsymbol{\eta}_t \in \mathbb{K}_{xu}^*,\\
&\|g^i_{k|t}\|_\infty \le \lambda,\ \forall\, k\in\mathbb{I}_{t+1}^{t+N}, i \in\mathbb{I}_{1}^{V}.
\end{aligned}
\end{equation}}}

\subsection{Dual Classifier: Training Data Collection}
We use nuPlan \cite{nuplan} to simulate traffic scenarios and collect training data as follows: First, given the scene encoding of road agent history, traffic lights, and a vectorized map, Wayformer \cite{wayformer} forecasts the multi-modal trajectory of surrounding agents. Then, we formulate \eqref{opt:Primal_MPC} in CasADi~\cite{andersson2019casadi}, replacing $\mathbb{K}_s$ with $\zeta$-tightened cones ($\zeta = 0.5$) and solve using Gurobi~\cite{gurobi}. The feature is $\small{\mathbf{z}_t=\big\{\mathbf{o}_{t}^{\,i}-\mathbf{x}_{t}^{EV}\big\}_{i=1}^{V},}$where $\mathbf{o}_{t}^{\,i}$ denotes the $i$-th vehicle’s multi-modal predictions (top $M$ modes over horizon $N$) and $\mathbf{x}_{t}^{EV}$ is the ego planned trajectory.
At each time \(t\), we log features \(\mathbf{z}_t\) together with the optimal dual variables, and convert them into class labels \((\tilde{\mu}_t^\star,\tilde{g}_t^\star)\).
We collect over \(33{,}000\) (features, labels) pairs across \(300\) scenarios from the nuPlan Mini v1.2 dataset~\cite{nuplan}, and split the data \(85\) / \(15\%\) for training and evaluation, respectively.

\subsection{Dual Classifier: Architecture Design}
The classifier uses a Transformer-style~\cite{attention}, decoder-only architecture \emph{without positional encodings} to preserve permutation equivariance to vehicle ordering. Input features $\mathbf{z}_t$ are linearly embedded and processed by multi-head attention blocks with layer normalization, residual connections, and feedforward layers capture inter-vehicle interactions. Decoder is unrolled for $N$ steps to produce class-1 probabilities (as in RAID-Net~\cite{kim_iv}), modeling temporal dependencies via recurrence. Training uses weighted binary cross-entropy with AdamW~\cite{adamw} optimizer and cosine-annealing schedule~\cite{kim_iv}.

\subsection{Numerical Results} \label{sec:results}
\subsubsection{Classifier Evaluation} 

We evaluate on the \emph{evaluation split}, comparing normalized BCE of $\pi^{\mathrm{class}}_\zeta$ (1.1M) against an MLP $\pi^{\mathrm{MLP}}_\zeta$ (1.4M) and RAID-Net $\pi^{\mathrm{RAIDN}}_\zeta$ (4.6M) in Fig.~\ref{fig:joint_metrics}. $\pi^{\mathrm{class}}_\zeta$ attains lowest normalized BCE. Its confusion matrix (Fig.~\ref{fig:joint_metrics}) shows recall $0.99$, precision $0.97$, and accuracy $0.96$, indicating few false negatives and improved interaction modeling over baselines under multi-modal predictions.
\begin{figure} 
    \centering
    \includegraphics[width=\linewidth]{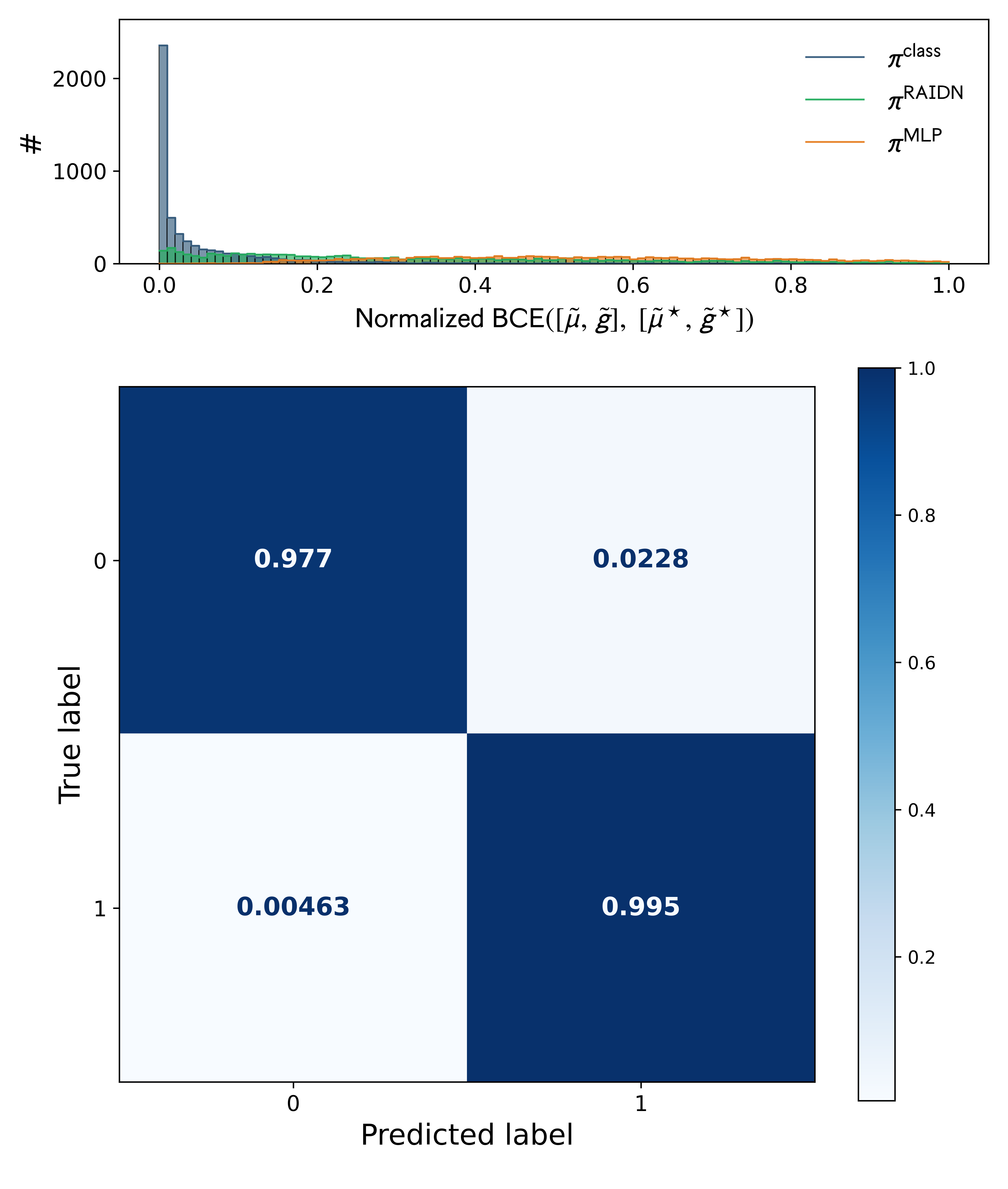} 
    \caption{Per-sample normalized binary cross-entropy (top) and normalized confusion matrix (bottom) for the $(\mu,g)$-dual classification, evaluated on the evaluation data split. We compare our classifier $\pi^{\mathrm{class}}_\zeta$ with an MLP $\pi^{\mathrm{MLP}}_\zeta$ and RAID-Net $\pi^{\mathrm{RAIDN}}_\zeta$ \cite{kim_iv}.}
    \label{fig:joint_metrics}
\end{figure}

\subsubsection{Closed-loop Simulation} \label{sec:cl_sim}
In nuPlan~\cite{nuplan}, we evaluate \emph{Full MPC}~\eqref{opt:Primal_MPC} (not $\zeta$ tightened) and \emph{Reduced MPC} (Alg.~\ref{alg:hmpc}) with $\epsilon=0.01$, a conservative underestimate of $\epsilon_{\text{crit}}$ in Proposition~\ref{prop1}. We run 25{,}000 solves across $150+$ traffic scenarios (e.g., lane merges, unprotected left turns). Table~\ref{tab:comparison_tab} reports feasibility, collision rate, average fraction of constraints enforced (constraint keep-rate in \emph{Reduced MPC}), and average fraction of ADF gains kept. We also report solve times and auxiliary runtimes: classifier inference, CasADi parametric calls (dual construction/solve), optimal-dual estimation, and duality-gap computation in SHIELD. These auxiliaries are not yet optimized and can be reduced further. All MPC formulations evaluated in this work, both the full and reduced problems, are warm-started using the shifted solution from the previous time step. The computational gains reported, therefore, reflect the benefit of constraint and variable screening beyond what warm-starting alone provides: the full MPC baseline already benefits from warm-starting, yet its solve times remain substantially higher than the screened formulation, confirming that the bottleneck is problem size rather than initialization quality.

\begin{figure}
    \centering
    \includegraphics[width=\linewidth]{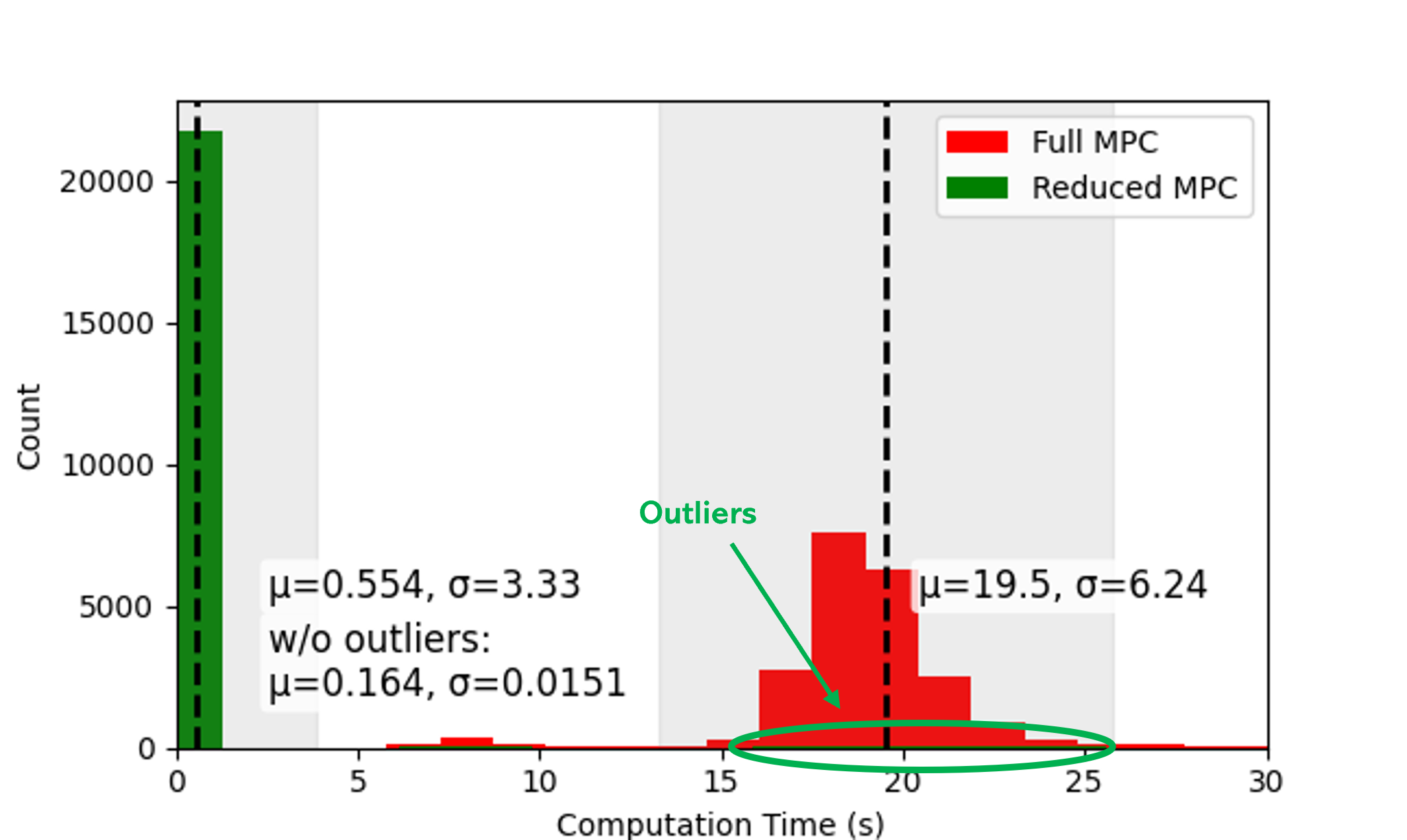}
    \caption{Distributions of total computation times for \emph{Full MPC} \eqref{opt:Primal_MPC} and \emph{Reduced MPC} (Alg.~\ref{alg:hmpc}) over 25{,}000 nuPlan steps (SMPC with multi-modal prediction). Shaded bands show $\pm 1\sigma$. The circled region contains outliers for \emph{Reduced MPC} computation time ($\sim 6\%$ of total count).}
    \label{fig:solve_time}
\end{figure}
As summarized in Table~\ref{tab:comparison_tab}, both policies remained feasible and incurred zero collisions. Under the proposed method, the \emph{Reduced MPC} enforced $6.80\%$ of collision-avoidance constraints on average while retaining $3.19\%$ of the ADF gains. Average solve time decreased to $0.498 \pm 3.33~\mathrm{s}$ for the \emph{Reduced MPC}, compared to $19.53 \pm 6.24~\mathrm{s}$ for the \emph{Full MPC}. The large variance is driven by rare outliers ($\sim6\%$) where the dual estimate prevents screening and the solve reverts to near-full size, which may happen when the $\pi^{\text{class}}_{\zeta}$ predictions are inaccurate. In a small fraction of time steps, the screening procedure removes few or no constraints, resulting in solve times comparable to the full MPC. These cases arise primarily when the ego vehicle is in close proximity to multiple interacting agents simultaneously, such that many scenario-indexed collision avoidance constraints are simultaneously active or near-active. In such configurations, the dual gap bound is insufficiently tight to certify inactivity, because the true optimal dual variables associated with these constraints are large. Geometrically, these correspond to states where the ego vehicle's feasible set is tightly constrained from multiple directions — precisely the situations where constraint reduction is least available regardless of method. Notably, these outlier steps do not compromise safety: when screening is inconclusive, the method defaults to solving the full problem, so the worst-case behavior is equivalent to standard MPC. The computational overhead of the failed screening attempt (dual estimation and screening test) is small relative to the QP solve, so the cost of conservatism in these cases is marginal.

Excluding outliers, the \emph{Reduced MPC} achieves $0.109 \pm 0.01~\mathrm{s}$; including auxiliary overhead (Fig.~\ref{fig:solve_time}), it is $35\times$ faster overall and $119\times$ faster without outliers.

\begin{table}[!t]
  \centering
  \caption{Closed-loop Simulation Results}
  \label{tab:comparison_tab}
  \begin{threeparttable}
  \setlength{\tabcolsep}{4pt}
    \begin{tabular}{@{}lcc@{}}
    \toprule
    \textbf{Performance Metric} &
    \makecell{\textbf{Full MPC}\\\textbf{\eqref{opt:Primal_MPC}}} &
    \makecell{\textbf{Reduced MPC}\\\textbf{Alg.~\ref{alg:hmpc}}} \\
    \midrule
    Feasibility (\%)                 & 100                         & 100                                   \\
    Collision (\%)                   & 0                           & 0                                     \\
    Avg.\ Constraints Enforced (\%)  & 100                         & $\mathbf{6.80\pm16.44}$                         \\
    Avg.\ ADF Gain Kept (\%)  & 100                         & $\mathbf{3.19 \pm 13.81}$                         \\
    Avg.\ Solve Time (Gurobi) [s]    & $19.53 \,\pm\, 6.24$        & $ ^\dag\mathbf{0.498 \,\pm\, 3.33}$ \\
    Avg.\ CasADi Func. Calls [s]  & \textemdash                 & $0.0345 \,\pm\, 0.0159$               \\    
    Avg.\ Classifier Query Time [s]  & \textemdash                 & $0.0049 \,\pm\, 0.0031$               \\
    Avg.\ Dual Approx.\ Time [s]     & \textemdash                 & $0.0083 \,\pm\, 0.0041$               \\
    Avg.\ Total Computation Time [s] & $19.53 \,\pm\, 6.24$        & $^*\mathbf{0.554 \,\pm\, 3.33}$         \\
    \bottomrule
  \end{tabular}
  \begin{tablenotes}\footnotesize
      \item[\dag]Total solve time excluding outliers: $0.109 \,\pm\, 0.01$ s.
    \item[*]Total computation time excluding outliers: $0.164 \,\pm\, 0.0151$ s.
  \end{tablenotes}
  \end{threeparttable}

\end{table}

\begin{figure*}[!t]
  \includegraphics[width=\textwidth]{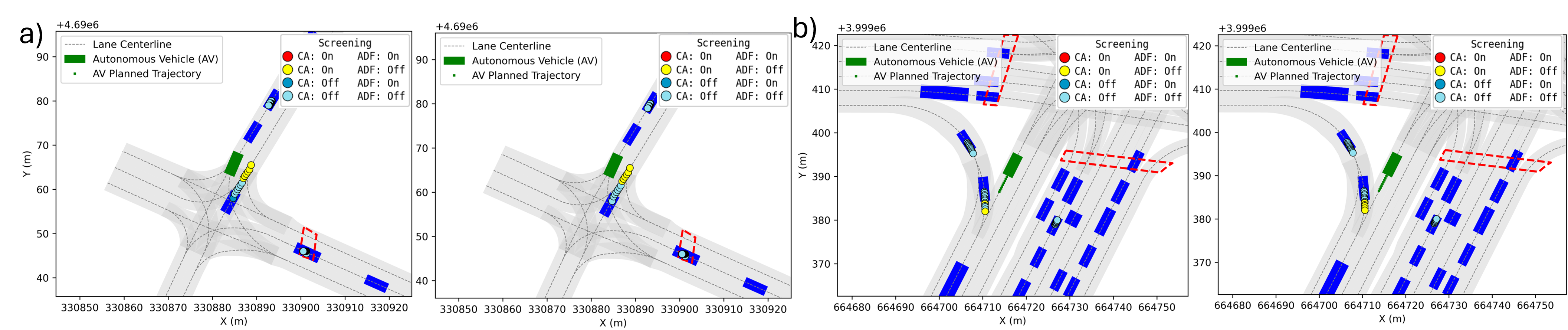}
  \caption{Two sampled real-world scenarios from nuPlan controlled by the \emph{Reduced MPC} (left) and the \emph{Full MPC} (right): (a) unprotected left turn, and (b) lane merge. Colored circles denote multi-modal predictions from a Wayformer model~\cite{wayformer}, with screening status indicated by color: \emph{red} (CA: On, ADF: On), \emph{yellow} (CA: On, ADF: Off), \emph{teal} (CA: Off, ADF: On), and \emph{light blue} (CA: Off, ADF: Off). Red dashed outlines mark the target vehicles for which collision avoidance constraints are retained in the \emph{Reduced MPC}
}

  \label{fig:closedloop}
\end{figure*}

Furthermore, the dual-approximation time (i.e., solving the unconstrained dual problem) using~\eqref{eq:reduced-duals} with classifier-selected variables is $0.0083 \pm 0.0041\,\mathrm{s}$, compared to $0.0182 \pm 0.0087\,\mathrm{s}$ for the full-dimensional unconstrained dual solve without the classifier. This speedup is critical for real-time control and highlights the benefit of data-driven dimension reduction for high-dimensional dual approximation.

Fig.~\ref{fig:closedloop} illustrates the constraint screening and ADF gain reduction in two real-world nuPlan scenarios: (a) an unprotected left turn and (b) a lane merge. In both cases, the left column shows the \emph{Reduced MPC}  solution and the right column shows the \emph{Full MPC} solution, with colored  circles indicating the screening status of each predicted mode for each target  vehicle. In scenario (a), the AV must navigate an unprotected left turn in the presence of oncoming traffic. The \emph{Reduced MPC} retains only the collision avoidance constraints corresponding to the modes that are active in the \emph{Full MPC} solution, indicated by the red and yellow circles. The light blue circles denote screened-out modes whose constraints and ADF gains are deemed inactive, confirming that the screening criterion correctly identifies the safety-relevant subset of the multi-modal predictions without discarding any constraints that would affect the AV's planned trajectory.

In scenario (b), the AV must merge into an adjacent lane occupied by multiple target vehicles traveling at highway speeds. The conflict zone spans a larger spatial region, and multiple target vehicles contribute active constraints. The \emph{Reduced MPC} retains a subset of the \emph{Full MPC} active constraint set, including the entry and exit timing constraints that govern when the AV commits to and completes the merge. These boundary constraints largely determine the yield behavior and gap acceptance decision, such that the \emph{Reduced MPC} closely reproduces the \emph{Full MPC} closed-loop trajectory while operating on a significantly smaller constraint set. In both scenarios, the screening procedure preserves the safety-critical constraints and ADF gains while eliminating redundant decision variables, reducing computational load without degrading closed-loop performance. To further quantify the closed-loop similarity between the \emph{Reduced MPC} and \emph{Full MPC}, we compute the Average Displacement Error (ADE) and Final Displacement Error (FDE) between their respective planned trajectories. In scenario (a), the \emph{Reduced MPC} achieves an ADE of $0.15\,\mathrm{m}$ and FDE of $0.88\,\mathrm{m}$, while in scenario (b) the trajectories are nearly identical, with an ADE of $0.006\,\mathrm{m}$ and FDE of $0.018\,\mathrm{m}$. These results confirm that despite retaining only $6.80\%$ of the collision avoidance constraints and $3.19\%$ of the ADF gains on average, the \emph{Reduced MPC} closely replicates the \emph{Full MPC} closed-loop behavior at $35\times$ lower computation time. Code and videos are available at: \url{https://github.com/MPC-Berkeley/SHIELD_mpc}.

The screening performance depends on three parameters: the safety margin $\zeta$, the cost tolerance $\epsilon$, and the sparsity weight $\lambda$. Increasing $\zeta$ enlarges the feasibility buffer in Proposition~1, making screening easier to certify, but tightens the constraints seen by the solver, which may increase conservatism or cause infeasibility when the original problem is already tight. Decreasing $\epsilon$ strengthens the certification requirement in Theorem~\ref{th:screen}, reducing the set of screenable constraints but providing a harder safety guarantee. The sparsity weight $\lambda$ controls the variable screening threshold: larger $\lambda$ drives more policy parameters to zero, increasing the number of removable variables but potentially degrading control performance. Table~\ref{tab:sensitivity} reports the sensitivity of SHIELD to $\epsilon$ and $\lambda$ over 20 scenarios; $\zeta$ is excluded from the sweep because it is a problem-level parameter encoding the required physical safety margin for the application, set from domain knowledge rather than algorithmic tuning. Varying $\zeta$ would change the safety specification itself rather than probe the algorithm's internal sensitivity, and since $\zeta$ directly enters the feasibility buffer in Proposition~1, increasing it may render some scenarios infeasible not due to algorithm failure but due to the tightened constraint set, conflating algorithm performance with problem feasibility. For $\epsilon$, tightening from $0.1$ to $0.001$ increases the constraint keep rate from $5.9\%$ to $52.4\%$, confirming that stricter certification retains more constraints, but feasibility and safety remain intact across all values. For $\lambda$, reducing from the nominal $100$ to $10$ raises the constraint keep rate from $5.6\%$ to $35.9\%$, as fewer variables are driven to zero, leaving less opportunity for screening. Across all tested values, feasibility remains at $100\%$ and collision rate at $0\%$, indicating that the safety guarantees are robust to parameter variation. In our experiments, $\zeta = 0.5$ and $\epsilon = 0.01$ were chosen conservatively and held fixed across all $150+$ scenarios without tuning. The primary failure mode occurs when the dual-gap bound is too loose to certify screening, typically in dense multi-agent configurations where many constraints are simultaneously near-active. In these cases, the algorithm defaults to the full problem, so safety is preserved, but no computational benefit is realized.

\begin{table}[h]
\centering
\caption{Parameter sensitivity of SHIELD averaged over 20 scenarios.}
\label{tab:sensitivity}
\resizebox{\columnwidth}{!}{%
\begin{tabular}{lrrrrr}
\toprule
Param. & Constr. (\%) & ADF (\%) & Time (s) & Feasible (\%) & Coll. (\%) \\
\midrule
\multicolumn{6}{l}{\textit{Sweep $\varepsilon$}}\\ 
0.001 & $52.4 \pm 10.9$ & $1.2 \pm 0.6$ & $0.160 \pm 0.052$ & 100.0 & 0.0 \\
0.005 & $12.1 \pm 9.1$  & $1.2 \pm 0.6$ & $0.115 \pm 0.007$ & 100.0 & 0.0 \\
\textbf{0.010} & $12.2 \pm 7.7$ & $1.1 \pm 0.6$ & $0.113 \pm 0.004$ & 100.0 & 0.0 \\
0.050 & $8.1 \pm 5.7$   & $1.0 \pm 0.6$ & $0.112 \pm 0.003$ & 100.0 & 0.0 \\
0.100 & $5.9 \pm 3.6$   & $1.0 \pm 0.6$ & $0.111 \pm 0.003$ & 100.0 & 0.0 \\
\midrule
\multicolumn{6}{l}{\textit{Sweep $\lambda$}}\\ 
10  & $35.9 \pm 4.4$ & $1.4 \pm 0.9$ & $0.129 \pm 0.017$ & 100.0 & 0.0 \\
50  & $9.6 \pm 7.5$  & $1.2 \pm 0.7$ & $0.121 \pm 0.019$ & 100.0 & 0.0 \\
\textbf{100} & $5.6 \pm 1.9$ & $1.4 \pm 0.8$ & $0.125 \pm 0.021$ & 100.0 & 0.0 \\
150 & $4.9 \pm 1.0$  & $1.3 \pm 0.7$ & $0.112 \pm 0.005$ & 100.0 & 0.0 \\
200 & $8.3 \pm 0.7$  & $1.9 \pm 1.4$ & $0.173 \pm 0.067$ & 100.0 & 0.0 \\
\midrule
\bottomrule
\end{tabular}}
\vspace{4mm}
{\footnotesize $^*$\textbf{Bold} values indicate nominal parameters used in the main experiments.}
\end{table}

\section{Conclusion}
We proposed a safe hierarchical inference framework for lightweight duality-based screening that provably reduces the decision dimension and constraint set of an $\ell_1$-regularized convex program, enabling high-rate computation with $\zeta$-tightening. Leveraging strong convexity and Lagrangian duality, we derived certification conditions to satisfy the \emph{original} constraints while removing irrelevant constraints and variables. We validated the approach on stochastic MPC with multi-modal predictions in complex, interactive traffic scenarios. The safe screening certificates derived in this work rely on strong convexity of the objective, Lipschitz continuity of the gradient, and Slater's constraint qualification. These assumptions hold for the quadratic-cost, linearly-constrained MPC formulations considered in our experiments and are satisfied by a broad class of practical MPC problems that use linearized dynamics and convex constraint sets. However, they exclude nonconvex formulations arising from nonlinear vehicle dynamics, non-convex collision geometry, or mixed-integer decision variables such as pass/stop maneuver selection. Extending the certification framework to these  settings is nontrivial: without strong duality, the gap bound in 
Theorem~\ref{th:screen} does not hold, and the screening test may produce false negatives (failing to screen) or, more critically, could not be applied at all without alternative bounding mechanisms. Possible avenues include applying screening after convexification within a sequential convex programming loop, or deriving relaxation-based dual bounds for mixed-integer programs, though both directions require substantial theoretical development and are left to future work. Beyond extending the theoretical framework, two practical directions remain. First, validating SHIELD on diverse large-scale convex programs outside MPC—such as network flow, sparse portfolio optimization, and sensor selection—would demonstrate the generality claimed by the problem formulation in Sec.~III. Second, a systematic sensitivity analysis of $(\zeta, \epsilon, \lambda)$ across problem classes and conditioning regimes is needed to characterize how screening rates, conservatism, and failure frequency depend on these parameters, and to understand the robustness and generalizability of the algorithm across different problem settings.

\bibliographystyle{IEEEtran}
\bibliography{references.bib}

@article{andersson2019casadi,
  title={CasADi: a software framework for nonlinear optimization and optimal control},
  author={Andersson, Joel AE and Gillis, Joris and Horn, Greg and Rawlings, James B and Diehl, Moritz},
  journal={Mathematical Programming Computation},
  year={2019},
  publisher={Springer}}

@article{wang2014iteration,
  author  = {Po-Wei Wang and Chih-Jen Lin},
  title   = {Iteration Complexity of Feasible Descent Methods for Convex Optimization},
  journal = {Journal of Machine Learning Research},
  year    = {2014},
  volume  = {15},
  number  = {45},
  pages   = {1523--1548}, 
}

@book{boyd2004convex,
  title={Convex optimization},
  author={Boyd, Stephen P and Vandenberghe, Lieven},
  year={2004},
  publisher={Cambridge University Press}
}

@INPROCEEDINGS{kim_iv,
  author={Kim, Hansung and Nair, Siddharth H. and Borrelli, Francesco},
  booktitle={2024 IEEE Intelligent Vehicles Symposium (IV)}, 
  title={Scalable Multi-modal Model Predictive Control via Duality-based Interaction Predictions}, 
  year={2024},
  volume={},
  number={},
  pages={1499-1504}}

@INPROCEEDINGS{drobustmpc,
  author={Hernandez, Bernardo and Trodden, Paul},
  booktitle={2016 UKACC 11th International Conference on Control (CONTROL)}, 
  title={Distributed model predictive control using a chain of tubes}, 
  year={2016},
  volume={},
  number={},
  pages={1-6}}

@INPROCEEDINGS{conflict_mpc,
  author={Kottinger, Justin and Almagor, Shaull and Lahijanian, Morteza},
  booktitle={2022 IEEE/RSJ International Conference on Intelligent Robots and Systems (IROS)}, 
  title={Conflict-Based Search for Multi-Robot Motion Planning with Kinodynamic Constraints}, 
  year={2022},
  volume={},
  number={},
  pages={13494-13499}}

@ARTICLE{centralized_multi,
  author={Li, Bai and Ouyang, Yakun and Zhang, Youmin and Acarman, Tankut and Kong, Qi and Shao, Zhijiang},
  journal={IEEE Robotics and Automation Letters}, 
  title={Optimal Cooperative Maneuver Planning for Multiple Nonholonomic Robots in a Tiny Environment via Adaptive-Scaling Constrained Optimization}, 
  year={2021},
  volume={6},
  number={2},
  pages={1511-1518}}

@ARTICLE{nair_mm_mpc,
  author={Nair, Siddharth H. and Lee, Hotae and Joa, Eunhyek and Wang, Yan and Tseng, H. Eric and Borrelli, Francesco},
  journal={IEEE Transactions on Control Systems Technology}, 
  title={Predictive Control for Autonomous Driving With Uncertain, Multimodal Predictions}, 
  year={2025},
  volume={33},
  number={4},
  pages={1178-1192}}

@article{candes_enhancing_2008,
	title = {Enhancing {Sparsity} by {Reweighted} $\ell_1$ {Minimization}},
	volume = {14},
	issn = {1531-5851},
	number = {5},
	journal = {Journal of Fourier Analysis and Applications},
	author = {Candès, Emmanuel J. and Wakin, Michael B. and Boyd, Stephen P.},
	month = dec,
	year = {2008},
	pages = {877--905},
}

@misc{gurobi,
  author = {{Gurobi Optimization, LLC}},
  title = {{Gurobi Optimizer Reference Manual}},
  year = 2024,
  url = "https://www.gurobi.com"
}

@INPROCEEDINGS{nuplan, 
  title={NuPlan: A closed-loop ML-based planning benchmark for autonomous vehicles},
  author={H. Caesar and J. Kabzan},
  booktitle={CVPR ADP3 workshop},
  year=2021
}

@article{feng2024unitraj,
  title={UniTraj: A Unified Framework for Scalable Vehicle Trajectory Prediction},
  author={Feng, Lan and Bahari, Mohammadhossein and Amor, Kaouther Messaoud Ben and Zablocki, {\'E}loi and Cord, Matthieu and Alahi, Alexandre},
  journal={arXiv preprint arXiv:2403.15098},
  year={2024}
}

@misc{wayformer,
  

  url = {https://arxiv.org/abs/2207.05844},

  author = {Nayakanti, Nigamaa and Al-Rfou, Rami and Zhou, Aurick and Goel, Kratarth and Refaat, Khaled S. and Sapp, Benjamin},

  title = {Wayformer: Motion Forecasting via Simple \& Efficient Attention Networks},

  publisher = {arXiv},

  year = {2022},

  copyright = {arXiv.org perpetual, non-exclusive license}
}

@misc{adamw,
      title={Decoupled Weight Decay Regularization}, 
      author={Ilya Loshchilov and Frank Hutter},
      year={2019},
      eprint={1711.05101},
      archivePrefix={arXiv},
      primaryClass={cs.LG},
      url={https://arxiv.org/abs/1711.05101}, 
}

@inproceedings{attention,
 author = {Vaswani, Ashish and Shazeer, Noam and Parmar, Niki and Uszkoreit, Jakob and Jones, Llion and Gomez, Aidan N and Kaiser, \L ukasz and Polosukhin, Illia},
 booktitle = {Advances in Neural Information Processing Systems},
 pages = {},
 publisher = {Curran Associates, Inc.},
 title = {Attention is All you Need},
 volume = {30},
 year = {2017}
}

@misc{arango,
      title={Learning-Based Approximate Nonlinear Model Predictive Control Motion Cueing}, 
      author={Camilo Gonzalez Arango and Houshyar Asadi and Mohammad Reza Chalak Qazani and Chee Peng Lim},
      year={2025},
      eprint={2504.00469},
      archivePrefix={arXiv},
      primaryClass={cs.RO},
      url={https://arxiv.org/abs/2504.00469}, 
}

@inproceedings{Sacks_2022,
   title={Learning to Optimize in Model Predictive Control},
  
   booktitle={2022 International Conference on Robotics and Automation (ICRA)},
   publisher={IEEE},
   author={Sacks, Jacob and Boots, Byron},
   year={2022},
   month=may, pages={10549–10556} }

@article{Stellato_2017,
   title={High-Speed Finite Control Set Model Predictive Control for Power Electronics},
   volume={32},
   ISSN={1941-0107},
   
   number={5},
   journal={IEEE Transactions on Power Electronics},
   publisher={Institute of Electrical and Electronics Engineers (IEEE)},
   author={Stellato, Bartolomeo and Geyer, Tobias and Goulart, Paul J.},
   year={2017},
   month=may, pages={4007–4020} }

@INPROCEEDINGS{quirynen,
  author={Chakrabarty, Ankush and Quirynen, Rien and Romeres, Diego and Di Cairano, Stefano},
  booktitle={2021 IEEE International Conference on Systems, Man, and Cybernetics (SMC)}, 
  title={Learning Disagreement Regions with Deep Neural Networks to Reduce Practical Complexity of Mixed-Integer MPC}, 
  year={2021},
  volume={},
  number={},
  pages={3238-3244}}

@ARTICLE{Marcucci,
  author={Marcucci, Tobia and Tedrake, Russ},
  journal={IEEE Transactions on Automatic Control}, 
  title={Warm Start of Mixed-Integer Programs for Model Predictive Control of Hybrid Systems}, 
  year={2021},
  volume={66},
  number={6},
  pages={2433-2448}}

@ARTICLE{lampos,
  author={Russo, Luigi and Nair, Siddharth H. and Glielmo, Luigi and Borrelli, Francesco},
  journal={IEEE Control Systems Letters}, 
  title={Learning for Online Mixed-Integer Model Predictive Control With Parametric Optimality Certificates}, 
  year={2023},
  volume={7},
  number={},
  pages={2215-2220}}

@misc{cauligi2022learningmixedintegerconvexoptimization,
      title={Learning Mixed-Integer Convex Optimization Strategies for Robot Planning and Control}, 
      author={A. Cauligi and P. Culbertson and B. Stellato and D. Bertsimas and M. Schwager and M. Pavone},
      year={2022},
      eprint={2004.03736},
      archivePrefix={arXiv},
      primaryClass={cs.RO},
      url={https://arxiv.org/abs/2004.03736}, 
}

@misc{coco,
      title={CoCo: Online Mixed-Integer Control via Supervised Learning}, 
      author={A. Cauligi and P. Culbertson and E. Schmerling and M. Schwager and B. Stellato and M. Pavone},
      year={2021},
      eprint={2107.08143},
      archivePrefix={arXiv},
      primaryClass={cs.RO},
      url={https://arxiv.org/abs/2107.08143}, 
}

@misc{primal-dual,
      title={Near-Optimal Rapid MPC using Neural Networks: A Primal-Dual Policy Learning Framework}, 
      author={Xiaojing Zhang and Monimoy Bujarbaruah and Francesco Borrelli},
      year={2019},
      eprint={1912.04744},
      archivePrefix={arXiv},
      primaryClass={eess.SY},
      url={https://arxiv.org/abs/1912.04744}, 
}

@techreport{elghaoui2010,
	Author = {Laurent {El Ghaoui} and Vivian Viallon and Tarek Rabbani},
	Institution = {EECS Dept., University of California at Berkeley},
	Month = {September},
	Number = {UC/EECS-2010-126},
	Title = {Safe Feature Elimination in Sparse Supervised Learning},
	Year = {2010}}

@article{bonnefoy2015,
   title={Dynamic Screening: Accelerating First-Order Algorithms for the Lasso and Group-Lasso},
   volume={63},
   ISSN={1941-0476},
   url={http://dx.doi.org/10.1109/TSP.2015.2447503},
   DOI={10.1109/tsp.2015.2447503},
   number={19},
   journal={IEEE Transactions on Signal Processing},
   publisher={Institute of Electrical and Electronics Engineers (IEEE)},
   author={Bonnefoy, Antoine and Emiya, Valentin and Ralaivola, Liva and Gribonval, Remi},
   year={2015},
}

@misc{fercoq2015,
      title={Mind the duality gap: safer rules for the Lasso}, 
      author={Olivier Fercoq and Alexandre Gramfort and Joseph Salmon},
      year={2015},
      eprint={1505.03410},
      archivePrefix={arXiv},
      primaryClass={stat.ML},
      url={https://arxiv.org/abs/1505.03410}, 
}

\appendix
\subsection{KKT Conditions} \label{appendix:kkt}
At a primal–dual optimal point $(\theta^\star_t,\mu^\star,\eta^\star,\nu^\star,g_1^\star,g_2^\star,\gamma^\star)$ of the epigraphic form \eqref{opt:l1_epigraph}:
The primal feasibility requires
{\small{\begin{align*}
&\tilde f^i_t(\boldsymbol{\theta}_t^\star)\le -\zeta\; \forall i\in\mathbb{I}_1^c\;
\bar f^i_t(\boldsymbol{\theta}^\star_t)\le 0,\; \forall i\in\mathbb{I}_1^{m-c} \\
&h^i_t(\boldsymbol{\theta}_t^\star)=0, \; \forall i\in\mathbb{I}_1^p\;
S\boldsymbol{\theta}_t^\star - s^\star \le 0,\;
-\,S\boldsymbol{\theta}_t^\star - s^\star \le 0,\;
s^\star \ge 0.
\end{align*}}}
The dual feasibility requires
{\small{\begin{align*}
&\mu_i^\star \ge 0,\; \forall i\in\mathbb{I}_1^c\;
\eta_i^\star \ge 0,\; \forall i\in\mathbb{I}_1^{m-c}\;
g_1^\star \ge 0,\;
g_2^\star \ge 0,\;
\gamma^\star \ge 0,\; \\
&\lambda\mathbf{1} - g_1^\star - g_2^\star - \gamma^\star = 0.
\end{align*}}}
Complementary slackness conditions are {\small{
\begin{align*}
&\mu_i^\star\,(\tilde f^i_t(\boldsymbol{\theta}_t^\star)+\zeta)=0,\; \forall i\in\mathbb{I}_1^c,\;
\eta_i^\star\,\bar f^i(\boldsymbol{\theta}_t^\star)=0,\; \forall i\in\mathbb{I}_1^{m-c},\; \\
&(g_1^\star)^\top(S\boldsymbol{\theta}_t^\star - s^\star)=0,\;
(g_2^\star)^\top(-S\boldsymbol{\theta}_t^\star - s^\star)=0,\;
(\gamma^\star)^\top s^\star=0.
\end{align*}}}
Lagrangian stationarity condition is 
{\small{
\begin{align*}
&0 \in \partial f^0_t(\boldsymbol{\theta}_t^\star)
+ \sum_i \mu_i^\star\,\partial \tilde f^i_t(\boldsymbol{\theta}_t^\star)
+ \sum_i \eta_i^\star\,\partial \bar f^i_t(\boldsymbol{\theta}_t^\star) \\
&+ \sum_i \nu_i^\star\,\partial h^i_t(\boldsymbol{\theta}_t^\star) + S^\top(g_1^\star - g_2^\star).
\end{align*}}}
Let $g^\star := g_1^\star - g_2^\star$, and the $\ell_1$ subgradient coupling is
{\small{\[
g^\star \in \lambda\,\partial\|S\boldsymbol{\theta}_t^\star\|_1\quad \text{or }(g^\star)_j=
\begin{cases}
\lambda\,\mathrm{sign}([S\boldsymbol{\theta}_t^\star]_j), & [S\boldsymbol{\theta}_t^\star]_j\neq 0,\\
\in[-\lambda,\lambda], & [S\boldsymbol{\theta}_t^\star]_j=0.
\end{cases}
\]}}
\noindent\textit{Moreover,} complementary slackness on the epigraph constraints gives
{\small{$s^\star_j = |[S\boldsymbol{\theta}_t^\star]_j|$ for all $j$. Using $g^\star \in \lambda\,\partial\|S\boldsymbol{\theta}_t^\star\|_1$,
\[
(g^\star)_j [S\boldsymbol{\theta}_t^\star]_j
=
\begin{cases}
\lambda\,|[S\boldsymbol{\theta}_t^\star]_j|, & [S\boldsymbol{\theta}_t^\star]_j\neq 0,\\
0, & [S\boldsymbol{\theta}_t^\star]_j=0\ .
\end{cases}
\]}}
Summing over $j$ yields the tight support-function identity
{\small{\[
g^{\star\top} S\boldsymbol{\theta}_t^\star \;=\; \lambda \sum_j |[S\boldsymbol{\theta}_t^\star]_j|
\;=\; \lambda\,\|S\boldsymbol{\theta}_t^\star\|_1.\quad\quad\quad\quad\square
\]}}

\end{document}